\documentclass[10pt, conference, letterpaper]{IEEEtran}
\IEEEoverridecommandlockouts
\pdfoutput=1
\usepackage{cite}
\usepackage{amsmath,amssymb,amsfonts}
\usepackage{algorithmic}
\usepackage{graphicx}
\usepackage{textcomp}
\usepackage{comment}
\usepackage{autobreak}
\usepackage{xcolor}
\usepackage{array}
\usepackage{amsmath,amssymb,amsfonts}
\usepackage{amsthm}
\usepackage{mathrsfs}
\usepackage{CJK}
\usepackage[ruled,linesnumbered]{algorithm2e}
\usepackage{amsmath}
\usepackage{url}
\usepackage{color}
\usepackage{pifont}
\usepackage{braket}
\usepackage{multirow}

\newtheorem{assumption}{Assumption}
\newtheorem{lemma}{Lemma}
\newtheorem{theorem}{Theorem}

\newcommand{\tabincell}[2]{\begin{tabular}{@{}#1@{}}#2\end{tabular}}

\allowdisplaybreaks[4]

%
\ifCLASSINFOpdf
\else
\fi
%
%

\hyphenation{op-tical net-works semi-conduc-tor}

\begin{document}
%
\title{Optimal Rate Adaption in Federated Learning with Compressed Communications}

\author{\IEEEauthorblockN{Laizhong Cui\IEEEauthorrefmark{1}, Xiaoxin Su\IEEEauthorrefmark{1}, Yipeng Zhou\IEEEauthorrefmark{2} \IEEEauthorrefmark{3} and Jiangchuan Liu\IEEEauthorrefmark{4}}
\IEEEauthorblockA{\IEEEauthorrefmark{1}College of Computer Science and Software Engineering, Shenzhen University, Shenzhen, China\\
Email: cuilz@szu.edu.cn, suxiaoxin2016@163.com}
\IEEEauthorblockA{\IEEEauthorrefmark{2}School  of Computing, Faculty of Science and Engineering, Macquarie University, Sydney, Australia\\Email: yipeng.zhou@mq.edu.au\\}
\IEEEauthorblockA{\IEEEauthorrefmark{3}
Peng Cheng Laboratory, Shenzhen, China\\}
\IEEEauthorblockA{\IEEEauthorrefmark{4}School of Computing Science, Simon Fraser University, Canada\\
Email: jcliu@cs.sfu.ca}

\IEEEauthorblockA{
	\thanks{
		\newline This work has been partially supported by National Key R\&D Program of China under Grant No.2018YFB1800302 and No.2018YFB1800805, National Natural Science Foundation of China under Grant No.61772345, Shenzhen Science and Technology Program under Grant No. RCYX20200714114645048, No. JCYJ20190808142207420 and No. GJHZ20190822095416463, and Pearl River Young Scholars funding of Shenzhen University, and Australia Research Council DE180100950, and the Major Key Project of PCL (PCL2021A08).}
		\thanks{\textit{(Corresponding author: Yipeng Zhou)}
	}
}
}

\maketitle

\begin{abstract}

Federated Learning (FL) incurs high communication overhead, which can be greatly alleviated by compression for model updates. Yet the tradeoff between compression and model accuracy in the networked environment remains unclear and, for simplicity, most implementations adopt a fixed compression rate only. In this paper, we for the first time systematically examine this tradeoff, identifying  the influence of the compression error on the final model accuracy with respect to the learning rate. Specifically, we factor the compression error of each global iteration into the convergence rate analysis under both strongly convex and non-convex loss functions. We then present an adaptation framework to maximize the final model accuracy by strategically adjusting the compression rate in each iteration. We have discussed the key implementation issues of our framework in practical networks with representative compression algorithms. Experiments over the popular MNIST and CIFAR-10 datasets confirm that our solution effectively reduces network traffic yet maintains high model accuracy in FL.

\end{abstract}

\begin{IEEEkeywords}
Federated Learning, Compression Rate,  Communication Traffic
\end{IEEEkeywords}


%
\IEEEpeerreviewmaketitle

\section{Introduction}
In today's networked world, data are generated and stored everywhere \cite{7488250}. While conventional learning tools are mostly centralized, relying on cloud datacenters to aggregate and analyze the data, {\em Federated Learning} (FL) has been built distributed \cite{mcmahan2017communication}, allowing clients physically remote from each other to collaboratively train learning models over the Internet. It effectively utilizes the rich data spread across different geo-locations and organizations, without compromising their privacy ~\cite{8859260},  \emph{e.g.}, 
for smart devices to predict human trajectories \cite{feng2020pmf} or for healthcare systems to develop predictive models \cite{brisimi2018federated}.

To coordinate the participating clients, a {\em parameter server} (PS) has to be deployed in a FL system for collecting, aggregating and distributing model updates, often in many iterations. This inevitably incurs huge communication overhead, which severely slows down the training process or even makes FL impractical if the model is of high dimensions \cite{huang2020physical}. To address this challenge, compression algorithms have been employed by FL clients \cite{konevcny2016federated}, which use {\em  quantization} or {\em sparsification} to reduce the size of a model update. 
For instance, TernGrad \cite{wen2017terngrad}  quantifies each model update to one of three centroid values in an unbiased manner. This can speed up the training of AlexNet on 8 GPUs by 3.4 times. Assuming that each original model update takes 4 bytes; indexing model updates into four centroids will only need 2 bits, or only $6.25\%$ of the original traffic for each update from the client to the PS \cite{konevcny2016federated}.  

While model compression accelerates computing and communication, it potentially reduces the final model accuracy \cite{sattler2019robust}. For simplicity, most of the existing compression algorithms have fixed their compression rates \cite{konevcny2016federated}, which can limit their applicability and effectiveness. There have been significant studies on the convergence analysis of FL \cite{li2019convergence, yang2021achieving}. The impact of model compression however has yet to be explored, not to mention the optimal rate configuration with compression.

In this paper, we for the first time systematically examine the tradeoff between the compression rate and model accuracy for FL in the networked environment. We show that the influence of the compression error on the final model accuracy is closely related to the learning rate. Specifically, we factor the compression error of each global iteration into the convergence rate analysis given both strongly convex and non-convex loss functions. 
We then present an adaptation framework to maximize the final model accuracy by strategically adjusting the compression rate in each global iteration. Our solution works well for dynamic networks and is generally applicable to different unbiased compression algorithms. We have discussed the key implementation issues of our framework in practical networks, with case studies of two representative compression algorithms, namely,  PQ \cite{suresh2017distributed} and QSGD \cite{alistarh2017qsgd}. 
Experiments over the popular MNIST and CIFAR-10 datasets \cite{krizhevsky2009learning} confirm that our framework can effectively improve the final model accuracy of FL under the same network bandwidth constraints for both IID and non-IID sample distributions.

The rest of the paper is organized as below. State-of-the-art relevant works are discussed in Sec.~\ref{RelatedWork}. Preliminary knowledge regarding model average algorithms and compression algorithms are introduced in Sec.~\ref{Preliminary}.  We then present the convergence rate analysis in Sec.~\ref{ConvergenceRate}, together with the rate-optimized adaptive compression framework in Sec.~\ref{AdaptingCompression}. We discuss the experiment results in Sec.~\ref{Experiment} and conclude the paper in Sec.~\ref{Conclusion}.

\section{Related Works} \label{RelatedWork}
In this section, we discuss related works in Federated Learning (FL) from two aspects, namely, model averaging and model compression. The former is the foundation for FL, and the later is essential for minimizing traffic overhead in FL. 

\subsection{FL Model Averaging}

In Federated Learning (FL) \cite{lim2020federated, yang2019federated}, decentralized clients collaboratively train machine learning models via {\em model averaging, i.e.,} iteratively averaging  locally  trained  models towards global optimum \cite{mcmahan2017communication}. 
It runs Stochastic Gradient Descent (SGD) in parallel on a subset of devices and then averages the resulting model updates via a central server once in a while. The convergence rates of FedAvg have been analyzed in \cite{li2019convergence} with strongly convex loss functions and \cite{yang2021achieving} with non-convex loss functions, respectively. There have also been variants of FedAvg that target different application scenarios. 
Wang \emph{et al.} in \cite{wang2019adaptive} designed a control algorithm to balance local updates and global aggregation in FL under limited resource in edge computing; 
FEDL \cite{tran2019federated}  optimizes the allocation of resources in wireless networks to balance the convergence and resource consumption. 
Our work focuses on the classical FedAvg with model compression; yet our solution can be generalized to work with these extended versions. 

\subsection{FL Model Compression}

Transmitting model updates in a large scale network incurs heavy traffic overhead. The original FedAvg  \cite{mcmahan2017communication} increases the number of local iterations on clients to minimize the costly global communications. 
Parallel FL architecture \cite{zhong2021} seeks to reduce the network bandwidth consumption through using multiple servers. Our work however focuses on model compression \cite{haddadpour2021federated, cui2021slashing}, which is orthogonal to them, and can work together with them to maximize traffic reduction.

Model compression in FL can be achieved through two methods \cite{shi2019convergence}, namely,  {\em sparsification} and {\em quantization}.

Sparsification algorithms select only a small number of significant model updates for transmission.
The DGC algorithm proposed in \cite{lin2017deep} discards 99.9\% of gradients in communication, achieving a very high compression rate. 
Gaia \cite{hsieh2017gaia} dynamically eliminates insignificant communications, and
STC \cite{sattler2019robust} compresses uploaded and downloaded data simultaneously through sparse and ternary methods. 
DC2 \cite{abdelmoniem2021dc2} further explores the delay-traffic trade off through adaptively coupling compression control and network latency. 
Although sparsification can be quite effective,  the solutions above are mostly biased, implying that they may not guarantee the convergence of FL and hence is not our focus in this work. 

Quantization algorithms map model updates to a small set of discrete values. For instance, the QSGD algorithm \cite{alistarh2017qsgd} generates a random number based on each model update and map each update to a centroid.  The PQ algorithm \cite{suresh2017distributed} splits model updates into intervals with a number of centroids, and each model update is randomly quantified to a centroid in an unbiased manner. 
In \cite{wen2017terngrad}, Wen \emph{et al.}  optimized communication by quantizing model updates to ternary values.
The mapping in most quantization algorithms is unbiased. Yet most of them adopt a fixed compression rate because the relation between the compression error and the learning rate is largely unknown.
In this paper,  we for the first time analyze the effect of the compression on the model convergence for both strongly convex and non-convex cases. We accordingly propose an adaptive compression framework with optimal rate allocation, which seeks the balance between model accuracy and communication overhead for FL over real world networks.



\section{Background and System Model} \label{Preliminary}
In this section, we introduce the necessary background on Federated Learning, particularly on the model averaging algorithm, \emph{i.e.}, FedAvg \cite{mcmahan2017communication}, and the compression algorithms. We also outline our system model with adaptive compression and summarize the key notations. 

\subsection{Federated Learning Basics}
In FL, data samples are distributed on $N$ clients, where the data set owned by client $i$ is $\mathcal{D}_i$. All $N$ clients collaboratively train a machine learning model, and their local results are aggregated through {\em model averaging}, typical through FedAvg or its variants \cite{wang2020federated}. A {\em loss function} of the training model on client $i$ can be denoted by:
\begin{eqnarray}
    F_i(\mathbf{w}, \mathcal{D}_i) = \frac{1}{|\mathcal{D}_i|} \sum_{\forall \xi\in\mathcal{D}_i} f(\mathbf{w}, \xi).
    \label{EQ:LocalLoss}
\end{eqnarray}
Here, $\mathbf{w}\in \mathbb{R}^d$  with dimension $d$ is the model parameters to be learned,  $|\mathcal{D}_i|$ denotes the size of client $i$'s dataset, and $f(\mathbf{w}, \xi)$ denotes the loss function calculated by a specific sample $\xi$.
The goal of FL is to train the model parameters such that the global loss function can be minimized, that is
\begin{eqnarray}
    \min_{\mathbf{w}}\quad F(\mathbf{w}) = \sum_{i=1}^N p_i F_i(\mathbf{w},\mathcal{D}_i),
    \label{EQ:GlobalLoss}
\end{eqnarray}
where $p_i$ represents the weight of client $i$, which is typically set to $p_i = \frac{|\mathcal{D}_i|}{\sum_{i'=1}^N|\mathcal{D}_{i'}|}$.

FedAvg conducts multiple global iterations to iteratively reduce the global loss function $F(\mathbf{w})$. In global iteration $t$,  a {\em Parameter Server} (PS)  randomly selects $K$ clients, denoted by $\mathcal{K}_t$, to participate in model training. 
Each selected client downloads the latest model parameters denoted by $\mathbf{w}_t$ from the PS, and then conducts $E$-round local iterations with  the local data samples. Each round works as follows:
\begin{eqnarray}
    \label{EQ:LocalTrain}
    \mathbf{w}^i_{t,j+1} = \mathbf{w}^i_{t,j}-\eta_{t} \nabla F_i(\mathbf{w}^i_{t,j},\mathcal{B}^i_{t,j}),
\end{eqnarray}
where $j$ represents the $j^{th}$ local iteration, $\eta_t$ represents the learning rate at global iteration $t$, and $\mathcal{B}^i_{t,j}$ represents the sample batch with size $B$ randomly selected by client $i$ for this local iteration. 
After the $E$ local iterations, participating clients will send model updates $\mathbf{U}^i_t = \sum_{j=0}^{E-1} \nabla F_i(\mathbf{w}^i_{t,j},\mathcal{B}^i_{t,j})$ back to the PS for aggregation, as follows
\begin{eqnarray}
    \label{EQ:GlobalAggregation}
    \mathbf{w}_{t+1} =\mathbf{w}_{t} \!\!-\!\! \eta_{t}\sum_{i\in \mathcal{K}_{t}}\frac{|\mathcal{D}_i|}{\sum_{i^{'}\in\mathcal{K}_{t}}|\mathcal{D}_{i^{'}}|}\mathbf{U}^i_t.
\end{eqnarray}

\subsection{Compression with Rate Adaptation}

Uploading model updates $\mathbf{U}^i_t $ to the PS can be very time consuming in dynamic network environment given the limited upload bandwidth \cite{konevcny2016federated}. Compression can be applied to $\mathbf{U}^i_t $, so as to reduce the traffic. As mentioned in the previous section, we focus on quantization-based compression, which uses a small number of centroids to represent model updates. Let $\widetilde{\mathbf{U}}^i_t$ denote the compression of $\mathbf{U}^i_t $. To avoid compromising the convergence rate of FL, $\widetilde{\mathbf{U}}^i_t$ is typically set as a random variable with expectation $\mathbb{E}\widetilde{\mathbf{U}}^i_t=\mathbf{U}^i_t$ \cite{alistarh2017qsgd}. In other words, $\widetilde{\mathbf{U}}^i_t$ is the unbiased estimation of $\mathbf{U}^i_t$.\footnote{There are also biased compression algorithms that do not satisfy $\mathbb{E}\widetilde{\mathbf{U}}^i_t=\mathbf{U}^i_t$, \emph{e.g.}, \cite{luping2019cmfl}. Though being efficient in certain application scenarios, the model accuracy trained with such biased algorithms cannot be guaranteed and hence is not our focus in this paper} Let $\widetilde{\mathbf{U}}^i_t$ be a discrete random variable with only $Z$ possible values, denoted by $u_1,u_2,\dots,u_Z$, which are referred to as $Z$ centroids for compression.


Figure~\ref{process} outlines the process to transmit model updates from a particular client to the PS with a fixed compression rate and adaptive compression rates, respectively. 
Though the former, for its simplicity, has been adopted in most existing works on quantization, our analysis in the next section suggests that the compression rate should be adjusted according to the influence of the compression error in each communication round in a real world network environment with resource constraints and dynamics, and we accordingly develop the optimal adaption strategy that maximizes the final model accuracy.  

\begin{figure}[h]
    \centering
    \includegraphics[width=0.9\linewidth]{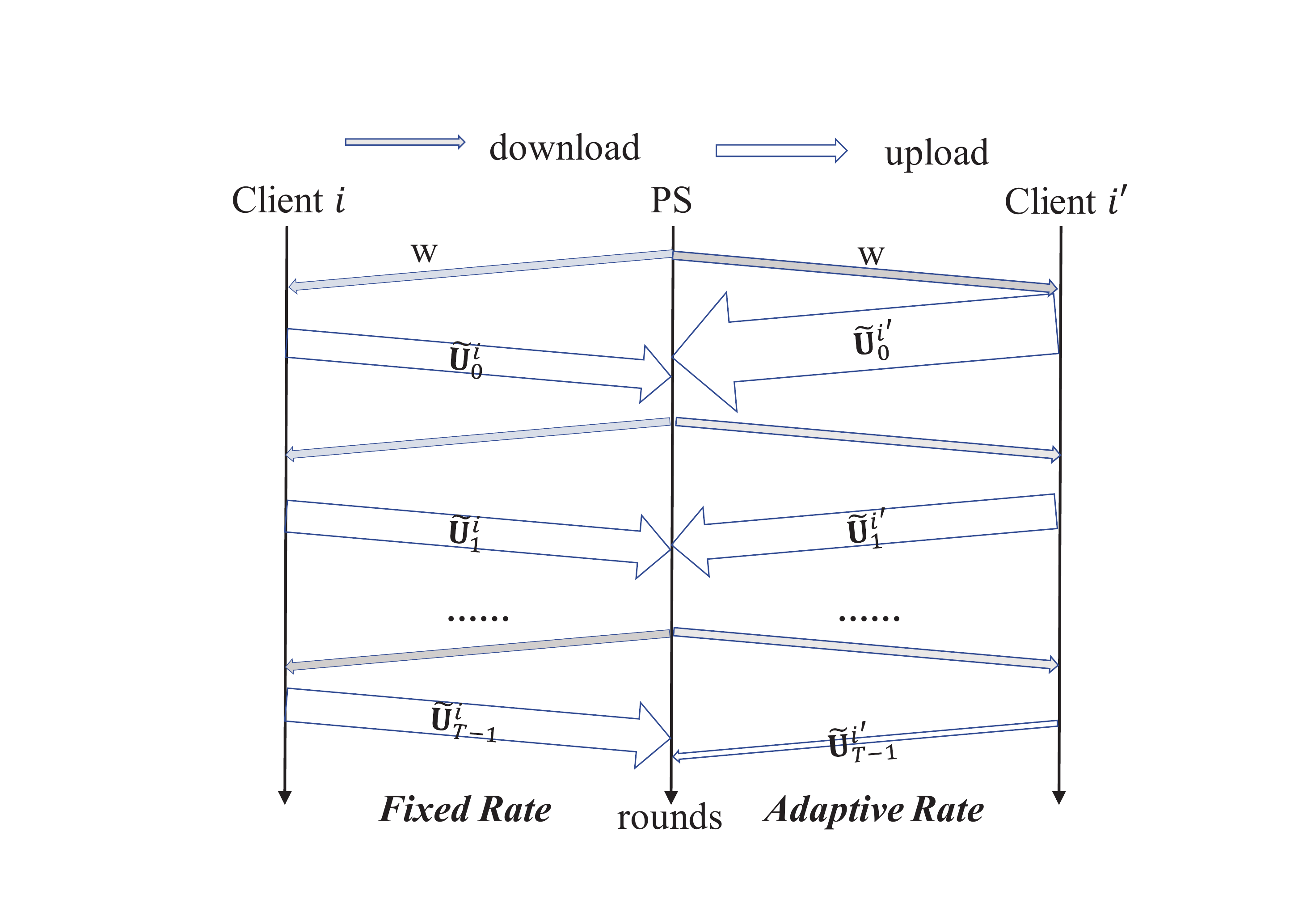}
    \caption{Transmitting model updates with a fixed compression rate (left) \emph{vs}  adaptive compression rates (right): the width of arrows indicates the amount of data uploaded by the client, implying different compression rates.}
    \label{process}
\end{figure}

In Table~\ref{NotationList}, we summarize the key notations used in this paper. Without loss of generality, we define the compression error as $J_t^i = \mathbb{E}\|\widetilde{\mathbf{U}}^i_t-\mathbf{U}^i_t\|^2$. Note that, this definition is applicable for all unbiased compression algorithms through customizing $J_t^i$. The influence of $J_t^i$ on the final model accuracy will be quantified in the next section through our convergence analysis with model compression.

\begin{table}[h]
    \centering
    \caption{Notation List}
    \begin{tabular}{|m{1cm}<{\centering}|m{6.5cm}<{\centering}|}
         \hline
         Notation & Meaning \\
         \hline
         $p_i$ & the weight of client $i$ among all clients \\
         \hline
         $\mathbf{U}^i_{t}$ & model updates of client $i$ in the $t^{th}$ global iteration\\
         \hline
         $\widetilde{\mathbf{U}}^i_{t}$ & compressed model updates randomly generated based on $\mathbf{U}^i_{t}$\\
         \hline
         $Z$ & number of centroids\\
         \hline
         $d$ & dimensions of the model \\
         \hline
         $\mathcal{K}_{t}$ & set of clients selected to participate the $t^{th}$ global iteration\\
         \hline
         $\Gamma_c/\Gamma_n$ & quantified the heterogeneity of the non-IID data distribution if loss function is strongly convex/non-convex\\
         \hline
         $\mathbf{w}^i_{t,j}$ & model updated $j$ times on  client $i$ in the $t^{th}$ global iteration \\
         \hline
         $\mathbf{w}_{t}$ & model downloaded from PS to clients in the $t^{th}$ global iteration \\
         \hline
    \end{tabular}
    \label{NotationList}
\end{table}



\section{ Convergence Analysis under Model Compression} \label{ConvergenceRate}

In this section, we analyze the convergence rate of FedAvg when  compressed model updates are transmitted to the PS. As in previous works  \cite{li2019convergence, yu2019parallel, haddadpour2021federated, yang2021achieving}, we make the following assumptions on the learned models in FL.

\begin{assumption}
\label{Assump:Smooth}
All loss functions, \emph{i.e.}, $F_1, F_2,\dots, F_N$ are $L$-smooth; that is, given $\mathbf{v}$ and $\mathbf{w}$, we have $F_i(\mathbf{v}) \le F_i(\mathbf{w}) + (\mathbf{v}-\mathbf{w})^T\nabla F_i(\mathbf{w})+\frac{L}{2}||\mathbf{v}-\mathbf{w}||^2$.
\end{assumption}

\begin{assumption}
\label{Assump:LocalVar}
Let $\xi^i_t$ denote a sample randomly and uniformly selected from client $i$. The variance of the stochastic gradients in each client is bounded:
$\mathbb{E}[\|\nabla F_i(\mathbf{w}^i_{t,j}, \xi^i_{t,j})-\nabla F_i(\mathbf{w}^i_{t,j})\|^2] \le \sigma^2$ for $\forall i,\forall j, \forall t$.
\end{assumption} 

\begin{assumption}
\label{Assump:BoundG}
The expected square norm of stochastic gradients is uniformly bounded, \emph{i.e.},  $\mathbb{E}[\|\nabla F_i(\mathbf{w}^i_{t,j}, \xi^i_{t,j})\|^2] \le G^2$ for $\forall i,\forall j, \forall t$.
\end{assumption}

These assumptions hold in typical  FL models, as discussed in the previous works  \cite{li2019convergence, yu2019parallel, haddadpour2021federated, yang2021achieving}. We also assume that the FL system uses a {\em Partial Client Participation Mode} \cite{li2019convergence}, \emph{i.e.,} in each round of global iteration, the PS randomly selects $K$ clients  according to the weight probabilities $p_1, p_2, \dots, p_N$ with replacement  to conduct local iterations. The set of selected clients is denoted by $\mathcal{K}_t$ at the $t^{th}$ global iteration. The PS then aggregates model updates by $\mathbf{w}_t = \mathbf{w}_{t-1} - \frac{\eta_{t-1}}{K}\sum_{i\in\mathcal{K}_{t-1}}\widetilde{\mathbf{U}}^i_{t-1}$, where $\widetilde{\mathbf{U}}^i_{t-1}$ denotes compressed model updates. 

 

\subsection{With Strongly Convex Loss  Functions}

We first analyze the convergence rate by assuming that all loss functions $F_1, F_2,\dots, F_N$, are $\mu$-strongly convex; that is, given $\mathbf{v}$ and $\mathbf{w}$, we have $F_i(\mathbf{v}) \ge F_i(\mathbf{w}) + (\mathbf{v}-\mathbf{w})^T\nabla F_i(\mathbf{w})+\frac{\mu}{2}||\mathbf{v}-\mathbf{w}||^2$. In practice, most loss functions belong to this category \cite{li2019convergence, dinh2020federated}.

For strongly convex loss functions, we use $\Gamma_c = F^*-\sum_{i=1}^Np_iF^*_i$ to quantize the degree of non-IID sample distribution on clients. Here, $F^*$ and $F^*_i$ are the optimal values of $F$ and $F_i$, respectively. 

\begin{theorem}
\label{THE:ConvexCR1}
Let $\eta_t=\frac{2}{\mu(t+\gamma)}$, $\gamma=\frac{8L}{\mu}$, $\kappa=\frac{L}{\mu}$ and $\mathbf{w}^*$ is the optimal model, the convergence rate of FedAvg with  compressed unbiased model updates is 
\begin{align} 
   \begin{autobreak} \MoveEqLeft[0]
    \mathbb{E}\left\|\mathbf{w}_{T} - \mathbf{w}^*\right\|^2\le
  \end{autobreak}\nonumber\\
   \begin{autobreak} \MoveEqLeft[0]
  \frac{4}{\gamma+T}\Big(\frac{\alpha}{\mu}+\sum_{t=0}^{T-1}\frac{\eta_t\sum_{i=1}^Np_iJ^i_t}{2K\mu}+2\kappa\left\|\mathbf{w}_{0} - \mathbf{w}^*\right\|^2\Big),
  \end{autobreak}
 \end{align}

where $\alpha=\frac{E\sum_{i=1}^Np_i^2\sigma^2}{B}+6EL\Gamma_c+2E(E-1)^2G^2+\frac{E^2G^2}{K}$.

\end{theorem}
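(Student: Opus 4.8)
The plan is to adapt the diminishing-step-size analysis of FedAvg on non-IID data (as in \cite{li2019convergence}) so that the per-iteration compression variance $J^i_t$ enters the recursion as an additive perturbation. First I would introduce the virtual full-participation, uncompressed aggregate. Writing each client's stacked update as $\mathbf{U}^i_t=\sum_{j=0}^{E-1}\nabla F_i(\mathbf{w}^i_{t,j},\mathcal{B}^i_{t,j})$ and $\overline{\mathbf{U}}_t=\sum_{i=1}^N p_i\mathbf{U}^i_t$, I would use that sampling $K$ clients with replacement according to $\{p_i\}$ together with unbiased compression gives, conditioned on the local trajectories, $\mathbb{E}\big[\tfrac1K\sum_{i\in\mathcal{K}_t}\widetilde{\mathbf{U}}^i_t\big]=\overline{\mathbf{U}}_t$. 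Expanding $\mathbb{E}\|\mathbf{w}_{t+1}-\mathbf{w}^*\|^2$ around $\mathbf{w}_t-\eta_t\overline{\mathbf{U}}_t$, the unbiasedness kills the cross term, so the square splits into an aggregate-update progress term $\mathbb{E}\|\mathbf{w}_t-\eta_t\overline{\mathbf{U}}_t-\mathbf{w}^*\|^2$ plus $\eta_t^2$ times the sampling-plus-compression noise $\mathbb{E}\|\tfrac1K\sum_{i\in\mathcal{K}_t}\widetilde{\mathbf{U}}^i_t-\overline{\mathbf{U}}_t\|^2$.

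The next step is to bound the three resulting pieces. For the progress term I would reuse the convex analysis: strong convexity and $L$-smoothness give a one-step contraction $(1-\eta_t\mu)\mathbb{E}\|\mathbf{w}_t-\mathbf{w}^*\|^2$, the mini-batch variance (Assumption~\ref{Assump:LocalVar}) contributes $\tfrac{E\sum_i p_i^2\sigma^2}{B}$, the non-IID gap contributes $6EL\Gamma_c$, and the divergence of the $E$ local iterates from their average, controlled via Assumption~\ref{Assump:BoundG}, contributes $2E(E-1)^2G^2$; these three are exactly the non-compression part of $\alpha$. For the noise term I would add and subtract $\tfrac1K\sum_{i\in\mathcal{K}_t}\mathbf{U}^i_t$ and use conditional independence of the client draws and of the compressors: the sampling part yields $\tfrac{1}{K}\sum_i p_i\|\mathbf{U}^i_t\|^2\le\tfrac{E^2G^2}{K}$ (again via Assumption~\ref{Assump:BoundG} and Jensen), completing $\alpha$, while the compression part yields $\tfrac1K\sum_i p_i J^i_t$ by the very definition $J^i_t=\mathbb{E}\|\widetilde{\mathbf{U}}^i_t-\mathbf{U}^i_t\|^2$. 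Collecting everything gives the master recursion
\[
\mathbb{E}\|\mathbf{w}_{t+1}-\mathbf{w}^*\|^2\le(1-\eta_t\mu)\,\mathbb{E}\|\mathbf{w}_{t}-\mathbf{w}^*\|^2+\eta_t^2\alpha+\frac{\eta_t^2}{K}\sum_{i=1}^N p_i J^i_t.
\]

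Finally I would unroll this recursion under $\eta_t=\tfrac{2}{\mu(t+\gamma)}$. The constant part $\eta_t^2\alpha$ together with the initial term is handled by the standard diminishing-step-size argument of \cite{li2019convergence}: with $\gamma=8L/\mu$ and $\kappa=L/\mu$ it produces $\tfrac{4}{\gamma+T}\big(\tfrac{\alpha}{\mu}+2\kappa\|\mathbf{w}_0-\mathbf{w}^*\|^2\big)$. Because $J^i_t$ is time-varying --- it is precisely the quantity the adaptive scheme will later optimize --- it cannot be absorbed into a single constant, so I would instead carry it explicitly as $\sum_{t=0}^{T-1}\big[\prod_{s=t+1}^{T-1}(1-\eta_s\mu)\big]\tfrac{\eta_t^2}{K}\sum_i p_i J^i_t$ and bound the accumulated contraction factor $\prod_{s=t+1}^{T-1}(1-\eta_s\mu)$, which converts each $\eta_t^2$ weight into the $\tfrac{\eta_t}{2K\mu}$ weight scaled by $\tfrac{4}{\gamma+T}$ and reproduces the middle term of the claim. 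The main obstacle I anticipate is not the compression --- once unbiasedness and cross-client independence are invoked it is a clean additive variance --- but the progress term: bounding the drift $\sum_i p_i\mathbb{E}\|\mathbf{w}_t-\mathbf{w}^i_{t,j}\|^2$ accumulated over the $E$ local steps and threading the $\sigma^2$, $\Gamma_c$ and $G^2$ contributions through the strong-convexity inequality so their coefficients assemble exactly into $\alpha$.
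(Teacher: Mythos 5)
Your proposal matches the paper's proof essentially step for step: the same virtual uncompressed full-participation aggregate, the same use of unbiasedness to cancel the cross term, the same split of the noise into a compression part $\tfrac{1}{K}\sum_{i}p_iJ^i_t$ plus a sampling part $\tfrac{E^2G^2}{K}$, the same reuse of the Li-et-al.\ contraction/drift lemmas for the progress term, and the identical master recursion $\mathbb{E}\|\mathbf{w}_{t+1}-\mathbf{w}^*\|^2\le(1-\eta_t\mu)\mathbb{E}\|\mathbf{w}_{t}-\mathbf{w}^*\|^2+\eta_t^2\alpha+\tfrac{\eta_t^2}{K}\sum_i p_iJ^i_t$. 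The only cosmetic difference is the final step, where you unroll via the products $\prod_{s=t+1}^{T-1}(1-\eta_s\mu)$ while the paper runs an induction on the ansatz $\mathbb{E}\|\mathbf{w}_t-\mathbf{w}^*\|^2\le\big(\sum_{i=0}^{t-1}(i+1+\gamma)\psi_i+v\big)/(\gamma+t)$; the two are equivalent up to a factor $\tfrac{T+\gamma}{T+\gamma-1}$ in the constants.
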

The key to the proof is to construct a model for each local iteration and derive the gap between this model and the optimal model. Please refer to Appendix \ref{ProofOfTheorem1} for the detailed proof. 

It is worth mentioning that,  the compression error can be bounded by $J_t^i = \mathbb{E}\|\widetilde{\mathbf{U}}^i_t-\mathbf{U}^i_t\|^2\leq (1-\delta)\|\mathbf{U}_t^i\|$, where 
$0<\delta\leq 1$ \cite{stich2018sparsified}. With this bound, the convergence rate $O(\frac{1}{T})$ was proved in the work \cite{cui2021slashing}. However, this bound ignores the relation between the number of centroids and the compression error,  and thereby we cannot adapt compression rate based on the  convergence rate derived with this bound.

\noindent{\bf Remark:} 
We can extract term $\sum_{t=0}^{T-1}\eta_t\sum_{i=1}^Np_{i} J^{i}_{t}$ from the convergence rate, which indicates that the influence of the compression error on the final model accuracy depends on the learning rate. To maximize the final model accuracy, we should minimize $\sum_{t=0}^{T-1}\eta_t\sum_{i=1}^Np_{i} J^{i}_{t}$, and accordingly adjust the compression rates in accordance with the learning rates.

\subsection{With Non-convex Loss Functions}

We proceed to analyze the convergence rate when loss functions are non-convex. In this case, we use the difference between global gradients and local gradients to quantize the non-IID degree of the  sample distribution on clients. That is $\Gamma_n \geq \mathbb{E}\|\nabla F_i(\mathbf{w}_{t})-\nabla F(\mathbf{w}_{t})\|^2 $.

\begin{theorem}
\label{THE:NonConvexCR2}
    Let $c>0$ is a constant and a learning rate satisfying $\eta_t\le\frac{1}{8LE}$,  $\eta_tEL\le\frac{K}{K-1}$ and $30NE^2\eta_t^2L^2\sum_{i=1}^Np_i^2+\frac{L\eta_t}{K}(90E^3L^2\eta_t^2+3E)<1$, the convergence rate of FedAvg with  compressed unbiased model updates is
    \begin{align} 
        \begin{autobreak} \MoveEqLeft[0]
            \min_{t\in[T]}\mathbb{E}\|\nabla F(\mathbf{w}_t)\|^2\le \frac{L}{2c\eta_{T-1}TE}\sum_{t=0}^{T-1}\eta_t^2\sum_{i=1}^Np_iJ^i_t
        \end{autobreak}\nonumber\\
        \begin{autobreak} \MoveEqLeft[0]
            +(\frac{LE^2\sigma^2}{2}+\frac{3E^2L\Gamma_n}{2K} )\frac{\sum_{t=0}^{T-1}\eta_t^2}{c\eta_{T-1}TE}+\frac{(\sigma^2+6E\Gamma_n)}{c\eta_{T-1}TE}
        \end{autobreak}\nonumber\\
        \begin{autobreak}\MoveEqLeft[0]
            \Big[\frac{5NE^2L^2\sum_{i=1}^Np_i^2\sum_{t=0}^{T-1}\eta_t^3}{2}+\frac{15E^3L^3\sum_{t=0}^{T-1}\eta_t^4}{2K}\Big]
        \end{autobreak}\nonumber\\
        \begin{autobreak} \MoveEqLeft[0]
            +\frac{F_0-F_*}{c\eta_{T-1}TE},
        \end{autobreak}
    \end{align}
\end{theorem}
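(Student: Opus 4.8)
The plan is to follow the standard descent-lemma route for non-convex stochastic optimization, adapted to simultaneously absorb the compression error, the stochastic-gradient noise, and the partial client participation. First I would invoke Assumption~\ref{Assump:Smooth} (which makes the aggregate $F=\sum_i p_iF_i$ itself $L$-smooth) to write the one-step inequality
\begin{align}
\mathbb{E}[F(\mathbf{w}_{t+1})] &\le \mathbb{E}[F(\mathbf{w}_t)] + \mathbb{E}\langle \nabla F(\mathbf{w}_t),\, \mathbf{w}_{t+1}-\mathbf{w}_t\rangle \nonumber \\
&\quad + \frac{L}{2}\mathbb{E}\|\mathbf{w}_{t+1}-\mathbf{w}_t\|^2. \nonumber
\end{align}
Substituting the aggregation rule $\mathbf{w}_{t+1}-\mathbf{w}_t=-\frac{\eta_t}{K}\sum_{i\in\mathcal{K}_t}\widetilde{\mathbf{U}}^i_t$ and taking conditional expectations, the unbiasedness $\mathbb{E}\widetilde{\mathbf{U}}^i_t=\mathbf{U}^i_t$ together with the with-replacement sampling rule collapses the inner-product term into $-\eta_t\langle\nabla F(\mathbf{w}_t),\sum_i p_i\mathbf{U}^i_t\rangle$. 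Expanding $\mathbf{U}^i_t$ as a sum of $E$ local stochastic gradients and applying a polarization identity together with $L$-smoothness, I would extract a negative term of order $\eta_tE\,\mathbb{E}\|\nabla F(\mathbf{w}_t)\|^2$ plus a remainder governed by the client drift $\mathbb{E}\|\mathbf{w}^i_{t,j}-\mathbf{w}_t\|^2$.

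Next I would bound the second-moment term $\mathbb{E}\|\mathbf{w}_{t+1}-\mathbf{w}_t\|^2$. The key observation is that, conditioned on the local models, the compression noise, the stochastic-gradient noise, and the client-sampling noise are mutually independent, so their variances add. The compression contributes a term proportional to $\frac{\eta_t^2}{K}\sum_i p_iJ^i_t$; the factor $\eta_t^2$ here is precisely the origin of the squared learning rate multiplying $J^i_t$ in the statement. The stochastic-gradient variance and the gradient dissimilarity feed in the $\sigma^2$ and $\Gamma_n$ terms through Assumption~\ref{Assump:LocalVar} and the defining inequality $\Gamma_n\ge\mathbb{E}\|\nabla F_i(\mathbf{w}_t)-\nabla F(\mathbf{w}_t)\|^2$.

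The technical core is to control the accumulated client drift $\mathbb{E}\|\mathbf{w}^i_{t,j}-\mathbf{w}_t\|^2$ over the $E$ local iterations. Using the local update rule~\eqref{EQ:LocalTrain}, Assumption~\ref{Assump:BoundG}, and repeated application of $L$-smoothness, this drift obeys a recursion in $j$ whose unrolling generates the higher-order contributions in $\eta_t^3$ and $\eta_t^4$ that appear in the bound. The step-size conditions $\eta_t\le\frac{1}{8LE}$ and $\eta_tEL\le\frac{K}{K-1}$ are exactly what is needed to keep this recursion contractive and to fold the drift back into the descent. I expect this drift recursion to be the main obstacle, since every error term must be carried with its exact numerical constant.

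Finally I would sum the one-step inequality over $t=0,\dots,T-1$, telescoping the $F$ terms into $F_0-F_*$ and collecting all error contributions on the right. After rearranging, each $\mathbb{E}\|\nabla F(\mathbf{w}_t)\|^2$ carries a coefficient of the form $\eta_tE$ times a factor that the third condition $30NE^2\eta_t^2L^2\sum_i p_i^2+\frac{L\eta_t}{K}(90E^3L^2\eta_t^2+3E)<1$ forces to remain positive; denoting its uniform lower bound by $c>0$ leaves $cE\sum_t\eta_t\mathbb{E}\|\nabla F(\mathbf{w}_t)\|^2$ on the left. Bounding the minimum by the $\eta_t$-weighted average and using $\sum_{t=0}^{T-1}\eta_t\ge T\eta_{T-1}$ (valid since the schedule is non-increasing) yields the denominator $c\eta_{T-1}TE$ and hence the stated inequality. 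It is precisely the need to reconcile the positivity factor with the exact coefficients produced by the drift recursion that makes that step delicate, as any slack there would prevent $c$ from emerging cleanly.
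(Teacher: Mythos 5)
Your proposal follows essentially the same route as the paper's proof: the $L$-smoothness descent step, unbiasedness collapsing the inner product into $-\eta_t\langle\nabla F(\mathbf{w}_t),\sum_i p_i\mathbf{U}^i_t\rangle$, the polarization identity, the decomposition of $\mathbb{E}\|\mathbf{w}_{t+1}-\mathbf{w}_t\|^2$ into compression, stochastic-gradient and client-sampling contributions, the local-drift recursion over the $E$ steps (which the paper imports as a lemma adapted from \cite{yang2021achieving} rather than re-deriving), the step-size conditions producing the constant $c$, and the final telescoping with $\sum_{t=0}^{T-1}\eta_t\ge T\eta_{T-1}$. The only differences are cosmetic, e.g., your compression-variance term keeps a tighter $1/K$ factor that the paper drops, which only strengthens the stated bound.
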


where $F_0$ and $F_*$ represent the initial value and optimal value of the global loss function, respectively. The key to the proof is to analyze the decrease of the loss function after each global iteration through the $L-smooth$ property. Please refer to Appendix \ref{ProofOfTheorem2} for the detailed proof.

\noindent{\bf Remark:}  It can be seen that the influence of compression error is related with $\eta_t^2$. To maximize the final model accuracy, we should minimize $\sum_{t=0}^{T-1}\eta_t^2\sum_{i=1}^Np_iJ^i_t$. Again, although the convergence rate $O(\frac{1}{\sqrt{T}})$ can be found in \cite{gao2021convergence}, it ignores the relation between the compression error and the number of centroids and hence does not work for rate adaption.

\section{Optimal Rate Adaptation for Model Compression} \label{AdaptingCompression}

The  communication overhead of each global iteration can be quantified by the number of centroids $Z$. 
Suppose that each original model update takes  $h$ bytes. With a compression algorithm, \emph{e.g.,} Probability Quantification (PQ) \cite{konevcny2016federated}, it takes $Zh$ bytes to represent the $Z$ centroids and $\log_2{Z}$ bits to represent the identification of each model update. The total traffic will be $Zh+d\log_2{Z}$, where $d$ is the model dimension. For high-dimensional models, we have  $d\gg Z$ \cite{seide20141}.  The traffic of each global iteration is therefore approximately $\frac{d\log_2{Z}}{8}$ bytes. 
Assume transmitting the original model updates needs $hd$ bytes, the compression rate is given by $\Lambda= \frac{hd}{d\frac{\log_2{Z}}{8}}=\frac{8h}{\log_2{Z}}$. 

Based on the derived convergence rates, we now establish the policy to maximize the final model accuracy by adapting  compression rates in accordance with learning rates.


\subsection{Optimal Rate Adaptation Framework}

We formulate the problem to optimize model accuracy by adapting compression rates without incurring additional communication overhead.

From Theorems~\ref{THE:ConvexCR1} and \ref{THE:NonConvexCR2}, we can extract  terms related with compression errors, and define the adaptive rate objective as:
\begin{equation}
    \mathcal{J}=\left\{
        \begin{aligned}
        &\sum_{t=0}^{T-1}\eta_t\sum_{i=1}^Np_iJ^i_t, \quad \textit{Loss $F_i$ is strongly convex},\\
        &\sum_{t=0}^{T-1}\eta_t^2\sum_{i=1}^Np_iJ^i_t, \quad \textit{Loss $F_i$ is non-convex}. \\
        \end{aligned}
        \right.
\end{equation}

Here $J^i_t$ is the compression error on client $i$ in the $t^{th}$ global iteration.  To maximize the final model accuracy with a finite $T$, we should minimize $\mathcal{J}$. 
Recall that the compression error is defined as  $J^i_t = \mathbb{E}\|\widetilde{\mathbf{U}}^i_t-\mathbf{U}^i_t\|^2$.  
Thus, $J_t^i$ is a function that will be affected by the number of centroids $Z_t$, where $Z_t$ should be a positive integer.

Let $d\times x_t$ bits be the approximate traffic with compressed model updates in the $t^{th}$ global iteration where $x_t = \log_2{Z_t}$.\footnote{We suppose that the compression rate is identical across clients in each global iteration} Let $\mathbf{x} = (x_0, \dots, x_{T-1})$,
the total traffic will be $d\times\|\mathbf{x}\|_1$, which can be bounded by a constant $C$.


We then have the following integer programming problem:
\begin{eqnarray}
\label{EQ:optimV1}
	 \mathbb{P}1: &&\min_{Z_0, \dots, Z_{T-1}} \mathcal{J}\notag\\
	&&s.t. \quad \|\mathbf{x}\|_1 \le \frac{C}{d}\notag\\
	&& \qquad Z_{t}=\{1, 2, \dots\} \quad \textit{for $t=0, \dots, T-1$.}
\end{eqnarray}
Due to the difficulty to solve an integer programming problem, we relax the constraint to allow $Z_t$ to be a real positive number. Then, we can formulate the following problem. 
\begin{eqnarray}
\label{EQ:optim}
	\mathbb{P}2: &&\min_{Z_0, \dots, Z_{T-1}} \mathcal{J}\notag\\
	&&s.t. \quad \|\mathbf{x}\|_1 \le \frac{C}{d}\notag\\
	&& \qquad Z_{t}\geq 1 \quad \textit{for $t=0, \dots, T-1$.}
\end{eqnarray}
Since $Z_t$ is a real number, we can solve  $\mathbb{P}2$ through gradient descent algorithm \cite{lee2016gradient},  and then each solved $Z_t$ can be rounded to its nearest integer.

\subsection{Implementation Issues}

To implement our adaption framework, such information as $\eta_t$, $p_i$, and $C$ are to be obtained by the PS. The learning rate $\eta_t$ can be reported by clients, which only takes 4 bytes in each communication round; The weight $p_i$ (usually proportional to the sample population owned by client $i$) and  the restriction of communication traffic $C$ can be determined prior to the model training.  After solving problem $\mathbb{P}2$, the PS can send out the adaptive compression rate  $Z_t$ (a single number) together with the latest model parameters to participating clients in each communication round.  

Our framework is applicable with various unbiased compression algorithms in FL. In practice, we need to derive the expression of $J_t^i$ with specified compression algorithms. To demonstrate the usability of our framework, we conduct a case study with two widely used compression algorithms for FL, namely, Probability Quantization \cite{suresh2017distributed} and Quantized SGD \cite{alistarh2017qsgd}. 

\subsubsection{Case Study with Probability Quantization (PQ)}
The PQ algorithm uses $Z$ centroids to split a certain range into $Z-1$ intervals. The model update in an interval will be probabilistically quantified in an unbiased manner to the  upper or lower centroids  of the interval. 
According to \cite{suresh2017distributed}, the compression error can be bounded as follows.
\begin{eqnarray}
\label{EQ:PQJt}
J^i_t&=&\mathbb{E}\left\|\widetilde{\mathbf{U}}^i_t-\mathbf{U}^i_t\right\|^2\notag\\
&\overset{(1)}{\le}& \frac{d(U_{max}-U_{min})^2}{4(Z-1)^2}\notag\le \frac{d(2U^2_{max}+2U^2_{min})}{4(Z-1)^2}\notag\\
&\le& \frac{2d\|\mathbf{U}_t^i\|^2}{4(Z-1)^2}= \frac{d\|\sum_{j=0}^{E-1} \nabla F_i(\mathbf{w}^i_{t,j},\mathcal{B}^i_{t,j})\|^2}{2(Z_t-1)^2}\notag\\
&\overset{(2)}\le& \frac{dE^2G^2}{2(Z_t-1)^2}=\frac{dE^2G^2}{2(2^{x_t}-1)^2},
\end{eqnarray}
where $U_{max}$ and $U_{min}$ are the maximum and minimum values of each element in vector $\mathbf{U}_t^i$, respectively. Here, inequality $(1)$ is adapted from Theorem 2 in \cite{suresh2017distributed} and inequality $(2)$ is from Assumption~\ref{Assump:BoundG}.

It can be verified that $J_t^i$ defined in Eq.\eqref{EQ:PQJt} is convex with respect to $x_t$. Substituting Eq.\eqref{EQ:PQJt} into $\mathbb{P}2$, we can solve $x_t$, and hence $Z_t$.


\subsubsection{Case Study with Quantized SGD (QSGD)}

Quantized SGD (QSGD) is another popular unbiased compression algorithm in FL. Let $\mathbf{U}$ denote an arbitrary model update vector and $U_e$ denote an arbitrary element in $\mathbf{U}$.  $U_e$  is quantified to  $Q(U_e)$ by 
$Q(U_e)=\|\mathbf{U}\|^2*sgn(U_e)*\xi_e,$
where $sgn(U_e)$ is the sign of $U_e$ and $\xi_e$ is a random variable affected by $U_e$. We can control $\xi_e$ such that $\mathbb{E}[Q(U_e)] = U_e$, and  the compression error of QSGD is bounded in \cite{alistarh2017qsgd}, which is

\begin{equation}
\label{EQ:QSGDJt}
    \begin{aligned}
    J^i_t&=\mathbb{E}\Big\|\widetilde{\mathbf{U}}^i_t-\mathbf{U}^i_t\Big\|^2\ \overset{(1)}{\le}\  \min(\frac{d}{Z_t^2},\frac{\sqrt{d}}{Z_t})\|\mathbf{U}^i_t\|^2\\
    &= \min(\frac{d}{Z_t^2},\frac{\sqrt{d}}{Z_t})\Big\|\sum_{j=0}^{E-1} \nabla F_i(\mathbf{w}^i_{t,j},\mathcal{B}^i_{t,j})\Big\|^2\\
    &\le \min(\frac{dE^2G^2}{Z_t^2},\frac{\sqrt{d}E^2G^2}{Z_t})=\min(\frac{dE^2G^2}{2^{2x_t}},\frac{\sqrt{d}E^2G^2}{2^{x_t}}),
    \end{aligned}
\end{equation}
where inequality $(1)$ is from Lemma 3.1 in \cite{alistarh2017qsgd}.
Again, Eq.~\eqref{EQ:QSGDJt} is convex with respect to $x_t$. By substituting Eq.~\eqref{EQ:QSGDJt} into the problem $\mathbb{P}2$, we can optimally determine $x_t$ and $Z_t$. 

\section{Performance Evaluation} \label{Experiment}
We have conducted extensive experiments to evaluate the effectiveness in terms of model accuracy, communication traffic and communication time by adaptive compression rates. In this section, we report our key findings with two representative datasets, namely, MNIST and CIFAR-10 \cite{krizhevsky2009learning}, for convex loss and non-convex loss, respectively.

\begin{enumerate}
    \item {\bf Convex Loss Case - MNIST} \cite{sattler2019robust, yang2021achieving}: The MNIST dataset contains 70,000 hand-written digital images with labels 0-9. Each sample is a 28×28 grayscale image. We randomly select 60,000 images as the training set and use the rest 10,000 images as the test set. We train a logistic regression model (which is convex) to classify the MNIST datasets. The model contains a fully connected layer with 784 inputs and 10 outputs (without activation function), and therefore involves  7,850 parameters in total. 
    Similar to \cite{li2019convergence}, we set the learning rate as $\frac{0.01}{1+t*E}$,  where $t$ is the global iteration index.
    
    \item {\bf Non-Convex Loss Case - CIFAR-10}: The CIFAR-10 dataset consists of 60,000 32×32 color images, which belong to 10 classes. We randomly select 50,000 images as the training set and 10,000 images as the test set. We train a convolutional neural network (CNN) model (which is non-convex). The CNN model has  three 3×3 convolutional layers. For the first two layers, each layer is followed by a max pooling layer; there are two fully connected layers for the output of 10 classes of probabilities. A similar model structure has been used in  \cite{wang2020optimizing,mcmahan2017communication} with 122,570 parameters in total. 
    We set the learning rate as  $\frac{0.05}{1+\sqrt{t*E}/40}$ according to Theorem~\ref{THE:NonConvexCR2}.
\end{enumerate}

In our experiments, we set up 100 clients and a single PS in the FL system and implement the FedAvg algorithm in \cite{mcmahan2017communication} for model training. In each round of communication, the PS will randomly select 10 participating clients  and each client is selected with the probability proportional to the number of samples owned by the client. 
Selected clients will download the latest model parameters from the PS and conduct $E=5$ local iterations.  For local iterations, we use the batch gradient descent algorithm \cite{li2014efficient}, and the batch size is set as 50 and 8 for convex and non-convex loss functions, respectively. These default parameter settings resemble those used in previous study \cite{mcmahan2017communication}.
The total numbers of global iterations for training convex and non-convex loss functions are 200 and 400, respectively. 

We have implemented both the Adaptive PQ and the Adaptive QSGD algorithms  to compress model updates based on our case discussion in the previous section. The standard PQ and QSGD therefore serve as the baselines for comparison. 
FedAvg (without compression) is implemented to evaluate how the model accuracy will be impacted by compression errors. 
We have also implemented an Adaptive TopK compression algorithm to evaluate the robustness of our framework. It is derived from the well-known TopK biased compression algorithm \cite{stich2018sparsified}, which only selects the $K$  model updates that are furthest away from $0$ for transmission. 

We evaluate compression algorithms from three critical perspectives, namely, {\em model accuracy}, {\em communication traffic}, and {\em communication time}.


\subsection{Comparison of Model Accuracy}

We start from the accuracy results with MNIST, \emph{i.e.}, the convex loss case. We adopt both IID and non-IID sample distributions for our experiments: (1) IID distribution, which randomly allocates 600 samples from the training set to each client; and (2)  Non-IID distribution, which  randomly allocates 600 samples out of 5 classes from the training set  to each client.

For PQ and QSGD, we set the default $Z$, \emph{i.e.}, the number of centroids, to be 16 for compression in each global iteration. Given total 200 global iterations, for both Adaptive PQ and Adaptive QSGD, we have the total traffic limitation $C=\sum_{t=0}^{T=199} d\times \log_2Z_t = 200d\times  \log_2{16}=800d$ bits to ensure a fair comparison for different compression algorithms.

The results of our experiments are shown in Figs.~\ref{PQ_MNIST} for PQ and \ref{QSGD_MNIST} for QSGD. The x-axis represents the number of global iterations, while the y-axis represents the model accuracy on the test dataset. From the results in Figs.~\ref{PQ_MNIST} and \ref{QSGD_MNIST}, one can observe that adaptive algorithms achieve higher model accuracy in all experimental cases, which also shed light on the effectiveness of our framework.

Note that, although Adaptive PQ and Adaptive QSGD can reach over 87\% model accuracy, it is a little bit less than the accuracy of FedAvg without compression because the model compression operation inevitably lowers the model accuracy. The traffic however is reduced significantly with compression, around $32/\log_2{16}=8$ times less than the original, assuming each original model update  takes $4$ bytes. 

\begin{figure}[h]
    \centering
    \includegraphics[width=0.9\linewidth]{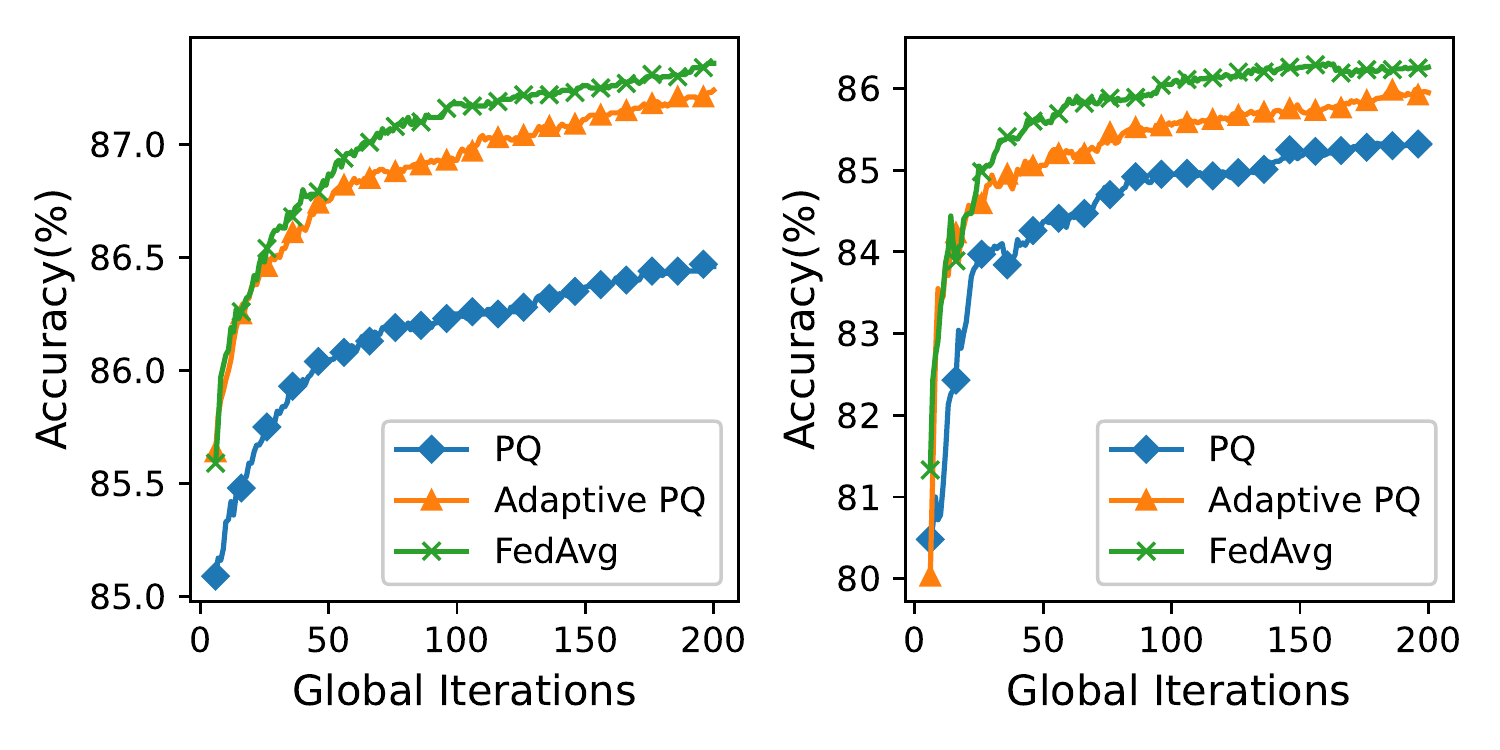}
    \caption{Accuracy comparison of Adaptive PQ and PQ in MNIST with IID (left) and non-IID (right) sample distributions (Our plotting starts from the $5^{th}$ global iteration). }
    \label{PQ_MNIST}
\end{figure}

\begin{figure}[h]
    \centering
    \includegraphics[width=0.9\linewidth]{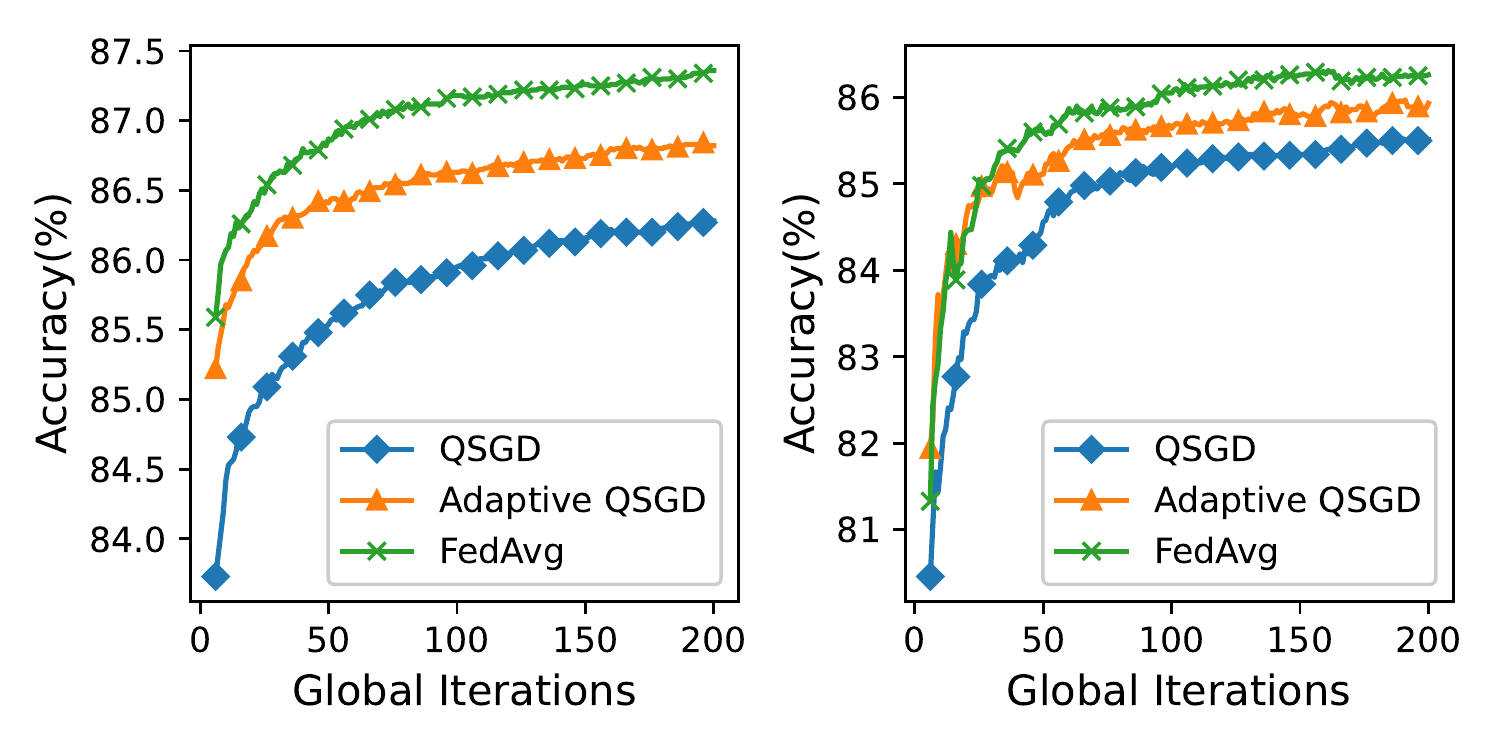}
    \caption{Accuracy comparison of Adaptive QSGD and QSGD in MNIST with IID (left) and non-IID (right) sample distributions (Our plotting starts from the $5^{th}$ global iteration).}
    \label{QSGD_MNIST}
\end{figure}

We have conducted similar experiments with the CIFAR-10 dataset, \emph{i.e.}, non-convex case. Again, we implement two sample distributions: (1) IID distribution, which randomly allocates 500 samples from the training set to each client; and (2)  Non-IID distribution, which  randomly allocates 500 samples out of 5 classes from the training set  to each client.

For both PQ and QSGD, we set $Z= 128$. For Adaptive PQ and Adaptive QSGD, it implies that $C=\sum_{t=0}^{T=399} d\times \log_2{Z_t} = 400d\times \log_2{128} = 2800d$ bits with 400 global iterations. 

The experiment results are presented in Figs.~\ref{PQ_CIFAR} for PQ and \ref{QSGD_CIFAR} for QSGD. From the experimental results, we can observe that Adaptive PQ and Adaptive QSGD can achieve higher model accuracy even if the loss functions are  non-convex. Compared to the results of MNIST, the model accuracy in Figs.~\ref{PQ_CIFAR} and \ref{QSGD_CIFAR} fluctuates more frequently because the classification task of the CIFAR-10 dataset is more complicated. Nevertheless, the model accuracy of our adaptive algorithms  is only slightly lower than that of FedAvg without compression, and adapting compression rates always achieves higher model accuracy than fixed rate compression algorithms, demonstrating the superiority of our framework. 


\begin{figure}[h]
    \centering
    \includegraphics[width=0.9\linewidth]{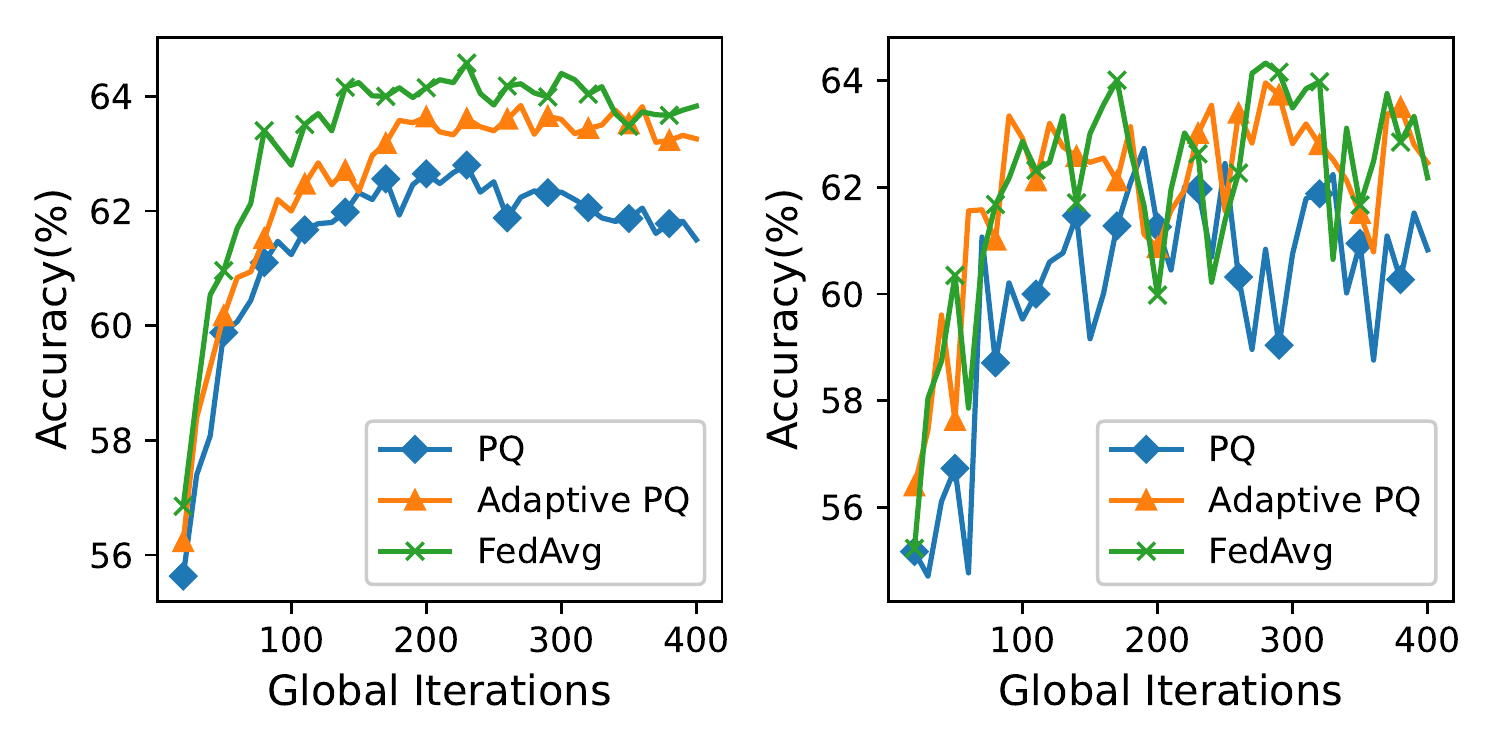}
    \caption{Accuracy comparison of Adaptive PQ and PQ in CIFAR-10 with IID (left) and non-IID (right) sample distributions (Our plotting starts from the $20^{th}$  global iteration).}
    \label{PQ_CIFAR}
\end{figure}

\begin{figure}[h]
    \centering
    \includegraphics[width=0.9\linewidth]{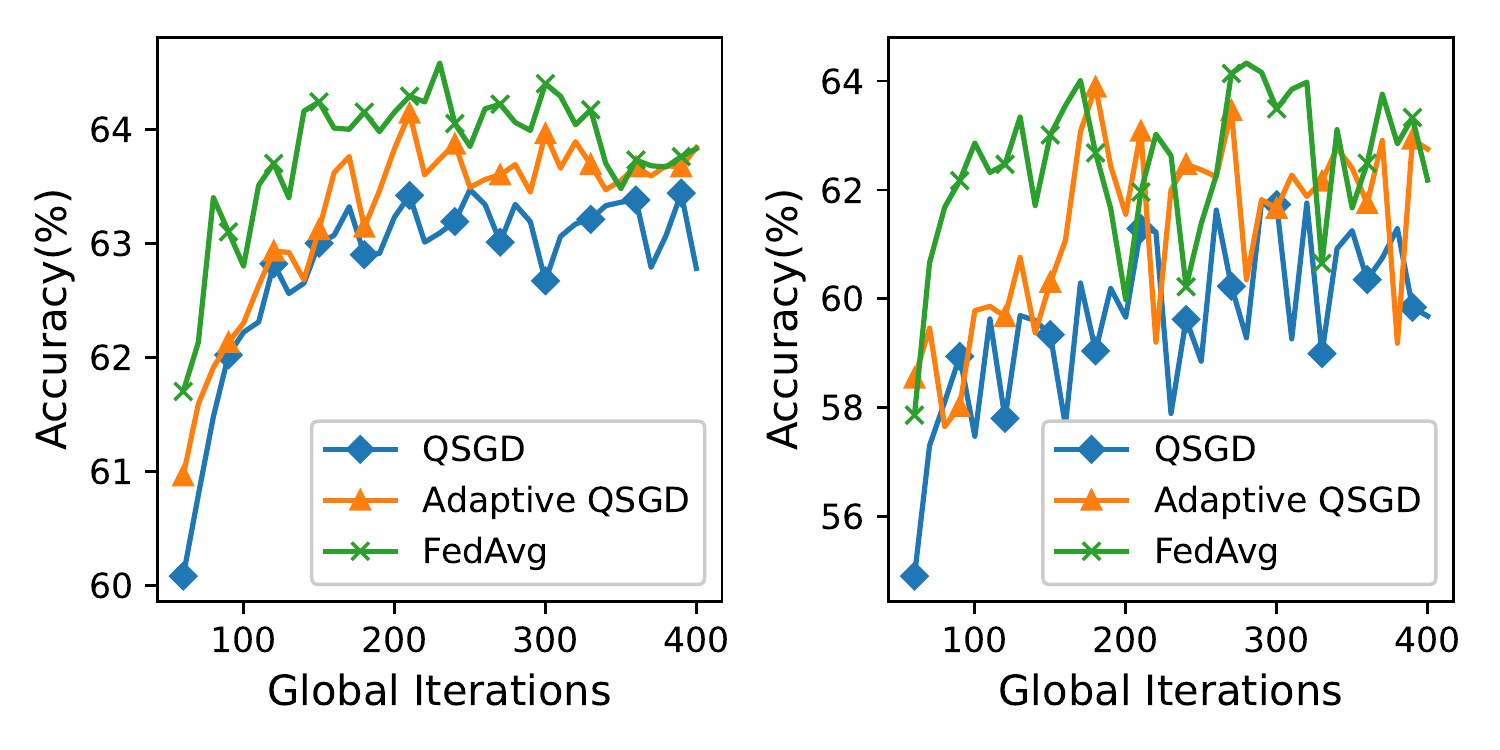}
    \caption{Accuracy comparison of Adaptive QSGD and QSGD in CIFAR-10 with IID (left) and non-IID (right) sample distributions (Our plotting starts from the $60^{th}$ global iteration).}
    \label{QSGD_CIFAR}
\end{figure}

\subsection{Evaluating Robustness}

Although our theoretical analysis has focused on unbiased compression, we believe that our framework is also applicable to biased compression because, in this case, the influence to the final model accuracy is related with the learning rate as well.


In our Adaptive TopK algorithm, let $K_t$ denote the number of selected model updates in round $t$, then the error function of TopK is  $J^i_t=\mathbb{E}\|\widetilde{\mathbf{U}}^i_t-\mathbf{U}^i_t\|\le(1-K_t/d)\mathbf{U}^i_t$ \cite{stich2018sparsified}. By  substituting it into $\mathbb{P}2$, we can determine the $K_t$ (\emph{i.e.}, $Z_t$) for Adaptive TopK.
We compare the model accuracy between Adaptive TopK and TopK with MNIST and CIFAR-10. The sample distributions are non-IID, same as the non-IID distributions  in previous experiments.

To restrict the total communication traffic, we set $K=3\%d$ for MNIST and $K= 1\%d$ for CIFAR-10 for the TopK algorithm.  In our experiments, each model update consumes 32 bits. The traffic limit $C$ for Adaptive TopK is $\sum_{t=0}^{T=199}{K_t}\times( 32+\log_2d) =6d\times( 32+\log_2d)$ bits for MNIST where $\log_2d$ bits are used to indicate the index of each transmitted model update. Similarly, we have $C = \sum_{t=0}^{T=399}K_t\times( 32+\log_2d) = 4d\times( 32+\log_2d)$ bits for CIFAR-10.

The experimental results are presented in Fig.~\ref{TopK} showing  that Adaptive TopK significantly outperforms TopK in terms of model accuracy.  Note that the model accuracy of TopK and Adaptive TopK is lower than that of FedAvg, because the compression rate of TopK is very high. This will be further verified in the next experiment on traffic consumption. 



\begin{figure}[h]
    \centering
    \includegraphics[width=0.9\linewidth]{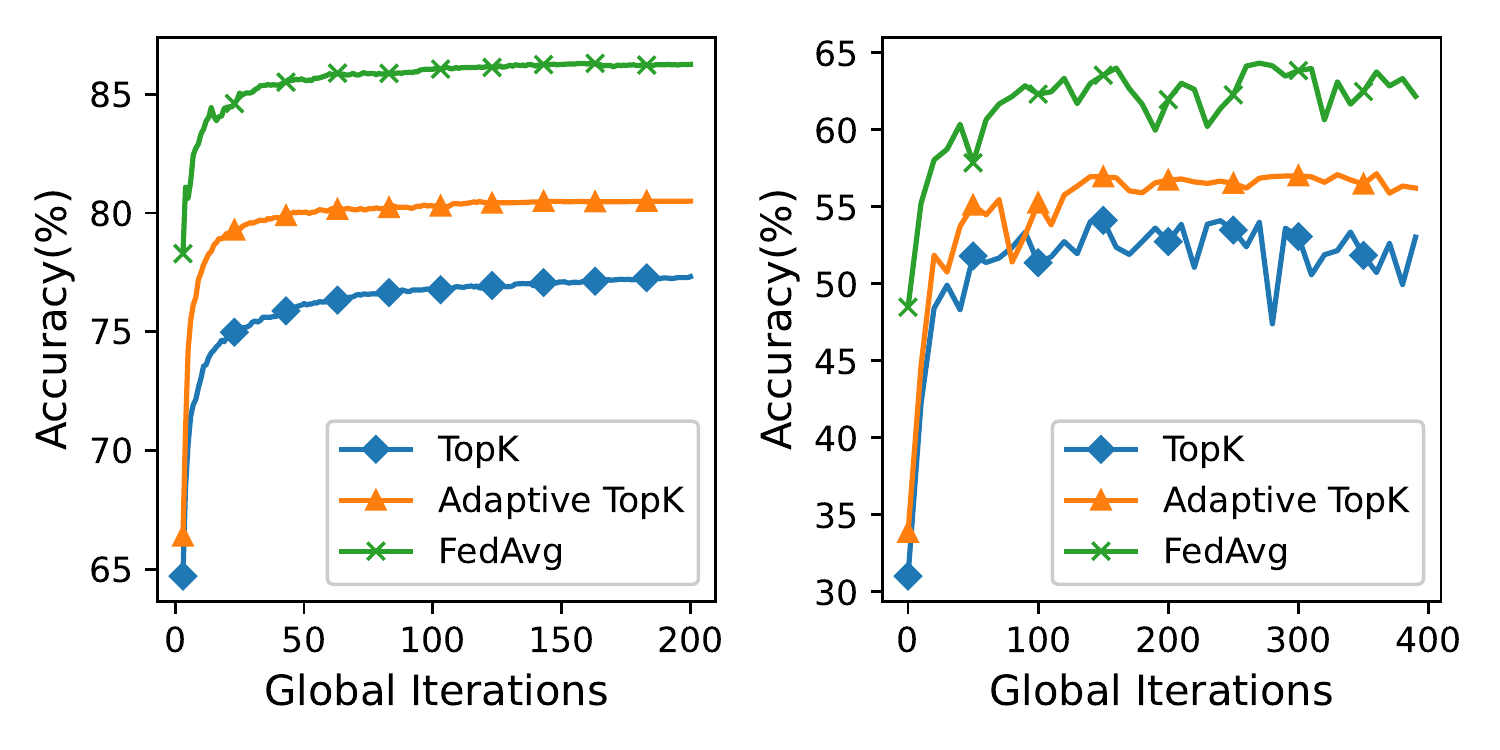}
    \caption{Accuracy comparison of Adaptive TopK and TopK in MNIST (left) and CIFAR-10 (right) with non-IID sample distributions.}
    \label{TopK}
\end{figure}

\subsection{Comparison of Communication Traffic and Time}

We have implemented a network simulator \cite{nishio2019client}. 

To evaluate the consumed total communication traffic and the communication time towards the target accuracy, we simulated wireless communications based on LTE networks, and use the urban channel model defined in the ITU-R M.2135-1 for Micro NLOS communication \cite{series2009guidelines}. In the network, the antenna heights of the PS and clients are 11m and 1m, and the antenna gains are 20dBm and 0dBi, respectively. The carrier frequency is 2.5GHz and 10 resource blocks are allocated to a client in each 0.5ms time slot. Under these settings, the average uplink 
throughput of clients is 1.4 Mbit/s. To emulate the dynamics in real networks, the network speed is sampled from the Gaussian Distribution and its standard deviation is 10\% of the mean throughput. 
The communication time in the $t^{th}$ round is estimated as $\max_{k\in \mathcal{K}_t} \frac{M}{\theta_k}$, where $\theta_k$ is the network speed of client $k$ and $M$ is the traffic.

\begin{table}[h]
	\centering
	\caption{Comparison of total uplink communication traffic and communication time from clients to the PS between fixed and adaptive compression algorithms with non-IID distribution}
	\begin{tabular}{|c|c|c|c|c|c|c|}
		\hline
		\multirow{2}{*}{Alg.} & \multirow{2}{*}{DataSet} & \multirow{2}{*}{\tabincell{c}{Target\\Accu.}}  & \multicolumn{2}{c|}{Traffic(MB)} & \multicolumn{2}{c|}{Comm. Time(s)}\\ \cline{4-7} 
		& & & Fixed & Adap. & Fixed & Adap.\\ \hline
		\multirow{2}{*}{PQ} & {MNIST} & 85\% & 5.0 & 1.9 & 3.4 & 1.3  \\ \cline{2-7} 
		& {CIFAR-10} & 62\% & 173.9 & 94.0 & 117.7 & 89.9  \\ \hline
		\multirow{2}{*}{QSGD} & {MNIST} & 85\% & 2.4 & 1.6 & 1.6 & 1.1 \\ \cline{2-7} 
		& {CIFAR-10} & 61.5\% & 255.7 & 175.9 & 172.8 & 156.0 \\ \hline
		\multirow{2}{*}{TopK} & {MNIST} & 77\% & 1.7 & 0.2 & 1.1 & 0.1 \\ \cline{2-7} 
		& {CIFAR-10} & 54.5\% & 21.1 & 10.9 & 14.3 & 7.4 \\ \hline
		\multirow{2}{*}{FedAvg} & {MNIST} & 85\% & \multicolumn{2}{c|}{7.2} & \multicolumn{2}{c|}{5.0} \\ \cline{2-7} 
		& {CIFAR-10} & 62\% & \multicolumn{2}{c|}{374.1} & \multicolumn{2}{c|}{258.2} \\ \hline
	\end{tabular}
	\label{CommunicationTraffic}
\end{table}

We evaluate the consumed total uplink communication traffic from clients to the PS and the total communication time towards the target accuracy. 
Based on the experiments with non-IID sample distributions in Figs.~\ref{PQ_MNIST}-\ref{TopK}, we present the comparison results in 
Table~\ref{CommunicationTraffic}.  
The results in Table~\ref{CommunicationTraffic} shows the tradeoff between model accuracy and the compression rate. FedAvg without compression consumes much more communication traffic and longer training time than other algorithms though its accuracy is slightly higher. 
Our adaptive  algorithms always consume  less  communication traffic and communication time than algorithms with fixed compression rates. The model accuracy of TopK is slightly lower, but it consumes much less communication traffic and communication time than PQ and QSGD. It is worth noting that Adaptive TopK can further reduce the traffic and time cost of Fixed TopK. 


\section{Conclusion} \label{Conclusion}

Federated learning involves heavy data communication across distributed network nodes. Compression is necessary for FL over today's capacity limited and dynamic Internet and wireless networks. In this paper, we for the first time analyzed the convergence rate of FedAvg with adaptive compression for both strongly convex and non-convex loss functions. We presented an  optimized framework that balances the  compression rates with learning rates, seeking to maximize the overall model accuracy. Our rate adaptation framework is generally applicable with diverse  compression algorithms, and we have closely examine the implementation with PQ and QSGD. Experiment results with such representative datasets as MNIST and CIFAR-10 suggested that our solution effectively reduces network traffic yet maintains high model accuracy in FL. 

Our work is an initial attempt toward this direction and there remain spaces to explore. In particular, with advances in  compression algorithms, it can be difficult to derive an exact expression of $J_t^i$ for a future algorithm. A possible solution is to fit the compression error function with different compression rates. This trial process is lightweight in communication since the PS only needs to communicate parameters of the fitting function with clients. We will investigate its effectiveness in theory as well conduct larger scale experiments to further optimize our solutions. 


\appendices
\section{Proof of Theorem~\ref{THE:ConvexCR1}}
\label{ProofOfTheorem1}
We use $\mathbf{v}^i_{t,j+1}=\mathbf{w}^i_{t,j}-\eta_t\nabla F_i(\mathbf{w}^i_{t,j},\mathcal{B}^i_{t,j})$ to represent the result of a local epoch result on client $i$, whereas $\mathbf{w}^i_{t,j+1}=\mathbf{v}^i_{t,j+1}$ if $j+1 < E$, else, $\mathbf{w}^i_{t+1,0}=\mathbf{w}^i_{t,0}-\frac{\eta_t}{K}\sum_{i\in\mathcal{K}_t}\widetilde{\mathbf{U}}^i_{t}$ and $\mathbf{v}^i_{t+1,0}=\mathbf{v}^i_{t,E}$. We further define $\bar{\mathbf{v}}_{t,j}=\sum_{i=1}^N p_i \mathbf{v}^i_{t,j}$, $\bar{\mathbf{w}}_{t,j}=\sum_{i=1}^N p_i \mathbf{w}^i_{t,j}$, $\bar{\mathbf{v}}_{t}=\bar{\mathbf{v}}_{t,0}$ and $\bar{\mathbf{w}}_{t}=\bar{\mathbf{w}}_{t,0}$. 


\begin{lemma}
	\label{Lemma:v-w}
	Let Assumptions \ref{Assump:Smooth}-\ref{Assump:BoundG} hold, $\eta_t\le\frac{1}{4L}$, we have $\mathbb{E}\left\|\bar{\mathbf{v}}_{t,j+1}-\mathbf{w}^*\right\|^2=(1-\eta_t\mu)\mathbb{E}\left\|\bar{\mathbf{w}}_{t,j}-\mathbf{w}^*\right\|^2+\eta_t^2\frac{1}{B}\sum_{i=1}^Np_i^2\sigma^2+6L\eta_t^2\Gamma_c+2\eta_t^2(E-1)^2G^2.$
\end{lemma}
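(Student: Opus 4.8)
The plan is to track a single local SGD step in the averaged iterate and reduce everything to standard smoothness/strong-convexity bookkeeping, following the route of the FedAvg analysis in \cite{li2019convergence}. First I would write $\bar{\mathbf{v}}_{t,j+1}=\bar{\mathbf{w}}_{t,j}-\eta_t\mathbf{g}_{t,j}$ with the stochastic aggregate gradient $\mathbf{g}_{t,j}=\sum_{i=1}^N p_i\nabla F_i(\mathbf{w}^i_{t,j},\mathcal{B}^i_{t,j})$ and its full-batch counterpart $\bar{\mathbf{g}}_{t,j}=\sum_{i=1}^N p_i\nabla F_i(\mathbf{w}^i_{t,j})$. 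Expanding $\|\bar{\mathbf{v}}_{t,j+1}-\mathbf{w}^*\|^2$ and taking expectation over the batch sampling, unbiasedness $\mathbb{E}\mathbf{g}_{t,j}=\bar{\mathbf{g}}_{t,j}$ annihilates the cross term with the gradient noise, and the variance decomposition splits $\mathbb{E}\|\mathbf{g}_{t,j}\|^2$ into $\mathbb{E}\|\mathbf{g}_{t,j}-\bar{\mathbf{g}}_{t,j}\|^2+\|\bar{\mathbf{g}}_{t,j}\|^2$. Since the per-client noises are independent and zero-mean, the first piece equals $\sum_{i=1}^N p_i^2\,\mathbb{E}\|\nabla F_i(\mathbf{w}^i_{t,j},\mathcal{B}^i_{t,j})-\nabla F_i(\mathbf{w}^i_{t,j})\|^2$, which Assumption~\ref{Assump:LocalVar} (with batch size $B$) bounds by $\frac{1}{B}\sum_{i=1}^N p_i^2\sigma^2$; this yields the term $\eta_t^2\frac{1}{B}\sum_{i=1}^N p_i^2\sigma^2$.

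The remaining deterministic quantity is $-2\eta_t\langle\bar{\mathbf{w}}_{t,j}-\mathbf{w}^*,\bar{\mathbf{g}}_{t,j}\rangle+\eta_t^2\|\bar{\mathbf{g}}_{t,j}\|^2$. I would bound $\|\bar{\mathbf{g}}_{t,j}\|^2\le\sum_i p_i\|\nabla F_i(\mathbf{w}^i_{t,j})\|^2$ by convexity of the squared norm and then invoke $L$-smoothness as $\|\nabla F_i(\mathbf{w}^i_{t,j})\|^2\le 2L\big(F_i(\mathbf{w}^i_{t,j})-F_i^*\big)$. For the inner product I would split $\bar{\mathbf{w}}_{t,j}-\mathbf{w}^*=(\bar{\mathbf{w}}_{t,j}-\mathbf{w}^i_{t,j})+(\mathbf{w}^i_{t,j}-\mathbf{w}^*)$: the first part is controlled by Young's inequality $-2\langle\bar{\mathbf{w}}_{t,j}-\mathbf{w}^i_{t,j},\nabla F_i\rangle\le\frac{1}{\eta_t}\|\bar{\mathbf{w}}_{t,j}-\mathbf{w}^i_{t,j}\|^2+\eta_t\|\nabla F_i\|^2$, producing a local-drift term and one more smoothness term; the second part is handled by $\mu$-strong convexity, $-\langle\mathbf{w}^i_{t,j}-\mathbf{w}^*,\nabla F_i\rangle\le-(F_i(\mathbf{w}^i_{t,j})-F_i(\mathbf{w}^*))-\frac{\mu}{2}\|\mathbf{w}^i_{t,j}-\mathbf{w}^*\|^2$, which is the source of the contraction factor $(1-\eta_t\mu)$ after convexity turns $\sum_i p_i\|\mathbf{w}^i_{t,j}-\mathbf{w}^*\|^2$ into a lower bound on $\|\bar{\mathbf{w}}_{t,j}-\mathbf{w}^*\|^2$.

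The crux is the bookkeeping of the function-value terms. Collecting the smoothness contributions gives a coefficient $-2\eta_t(1-2L\eta_t)$ on $\sum_i p_i(F_i(\mathbf{w}^i_{t,j})-F_i^*)$; the hypothesis $\eta_t\le\frac{1}{4L}$ forces $1-2L\eta_t\ge\frac12$, so this coefficient is nonpositive and the corresponding nonnegative function gap can be discarded or further processed. Re-expressing $\sum_i p_i(F_i(\mathbf{w}^i_{t,j})-F_i^*)$ through the global optimum and using the identity $\sum_i p_i(F^*-F_i^*)=\Gamma_c$ is what isolates the non-IID penalty; a second application of smoothness and convexity to the residual $\sum_i p_i(F_i(\mathbf{w}^i_{t,j})-F^*)$ accumulates the stated coefficient $6L\eta_t^2$ in front of $\Gamma_c$. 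This balancing step, where the step-size restriction is used precisely and the heterogeneity term emerges, is the main obstacle: it is easy to lose the right constant or to fail to cancel a stray positive function gap.

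Finally I would dispose of the local-drift term $\sum_i p_i\|\bar{\mathbf{w}}_{t,j}-\mathbf{w}^i_{t,j}\|^2$. Because all clients share the same model at the last synchronization and then run at most $E-1$ ungated SGD steps, and because $\bar{\mathbf{w}}_{t,j}$ is the $p_i$-weighted mean (hence the minimizer of the weighted squared deviation), Assumption~\ref{Assump:BoundG} gives $\mathbb{E}\|\mathbf{w}^i_{t,j}-\bar{\mathbf{w}}_{t,j}\|^2\le\eta_t^2(E-1)^2G^2$, so this term contributes the final $2\eta_t^2(E-1)^2G^2$. Assembling the variance term, the contraction factor, the $\Gamma_c$ term, and the drift term then gives exactly the claimed bound.
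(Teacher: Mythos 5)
Your proposal is correct and takes essentially the same route as the paper: the paper does not prove Lemma~\ref{Lemma:v-w} itself but defers to \cite{li2019convergence}, and your sketch reconstructs exactly that argument --- one-step expansion with unbiasedness killing the cross term, the variance/batch-size bound $\frac{1}{B}\sum_i p_i^2\sigma^2$, the smoothness/strong-convexity bookkeeping with coefficient $-2\eta_t(1-2L\eta_t)$ that yields the $(1-\eta_t\mu)$ contraction and the $6L\eta_t^2\Gamma_c$ heterogeneity term, and the bounded-gradient drift bound giving $2\eta_t^2(E-1)^2G^2$. The only caveat is cosmetic: the lemma's ``$=$'' should be ``$\le$'', which your derivation (correctly) produces.
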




\begin{lemma}
	\label{Lemma:Unbias}
	Let Assumptions \ref{Assump:Smooth}-\ref{Assump:BoundG} hold, we have $\mathbb{E}[\bar{\mathbf{w}}_{t,j}]=\bar{\mathbf{v}}_{t,j}.$
\end{lemma}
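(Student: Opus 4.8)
The plan is to exploit the two-fold unbiasedness that is built into the system model: the compression satisfies $\mathbb{E}\widetilde{\mathbf{U}}^i_t=\mathbf{U}^i_t$, and the partial participation draws the $K$ clients of $\mathcal{K}_t$ \emph{with replacement} according to the weights $p_1,\dots,p_N$, so that averaging over $\mathcal{K}_t$ reproduces the $p_i$-weighted mean. I read the assertion as a one-step (conditional) expectation over the selection and compression randomness of the current global round, with $\bar{\mathbf{v}}_{t,j}$ denoting the full-participation, uncompressed virtual iterate launched from the common downloaded model $\bar{\mathbf{w}}_{t,0}$.

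First I would dispose of the easy indices. For $1\le j\le E-1$ the definitions give $\mathbf{w}^i_{t,j}=\mathbf{v}^i_{t,j}$ for every client $i$, so $\bar{\mathbf{w}}_{t,j}=\bar{\mathbf{v}}_{t,j}$ holds deterministically and the claim is immediate. Hence the only place where the two sequences genuinely diverge is the aggregation boundary, i.e. the passage from round $t$ to the reset iterate $\mathbf{w}^i_{t+1,0}$, and this is where all the content lies.

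At the boundary I would write both averaged iterates in closed form. Telescoping the $E$ local steps yields $\mathbf{v}^i_{t,E}=\mathbf{w}^i_{t,0}-\eta_t\mathbf{U}^i_t$ with $\mathbf{U}^i_t=\sum_{j=0}^{E-1}\nabla F_i(\mathbf{w}^i_{t,j},\mathcal{B}^i_{t,j})$, whence $\bar{\mathbf{v}}_{t+1,0}=\sum_{i=1}^N p_i\mathbf{v}^i_{t,E}=\bar{\mathbf{w}}_{t,0}-\eta_t\sum_{i=1}^N p_i\mathbf{U}^i_t$, using that the downloaded model $\mathbf{w}^i_{t,0}$ is common to all clients and $\sum_i p_i=1$. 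On the real side the aggregation rule gives $\bar{\mathbf{w}}_{t+1,0}=\bar{\mathbf{w}}_{t,0}-\frac{\eta_t}{K}\sum_{i\in\mathcal{K}_t}\widetilde{\mathbf{U}}^i_t$. Then I would apply the tower property: conditioning on $\mathcal{K}_t$ and the past, unbiased compression replaces each $\widetilde{\mathbf{U}}^i_t$ by $\mathbf{U}^i_t$; taking the remaining expectation over the with-replacement draw of $\mathcal{K}_t$ converts $\frac{1}{K}\sum_{i\in\mathcal{K}_t}\mathbf{U}^i_t$ into $\sum_{i=1}^N p_i\mathbf{U}^i_t$, since each of the $K$ i.i.d. draws picks client $i$ with probability $p_i$. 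Combining gives $\mathbb{E}[\bar{\mathbf{w}}_{t+1,0}]=\bar{\mathbf{w}}_{t,0}-\eta_t\sum_{i=1}^N p_i\mathbf{U}^i_t=\bar{\mathbf{v}}_{t+1,0}$, which is exactly the claim.

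The main obstacle is bookkeeping rather than analysis: one must order the two independent randomizations correctly in the nested conditional expectation (compression on the inside, client sampling on the outside) and verify that it is precisely the with-replacement sampling with probabilities $p_i$ that makes $\frac{1}{K}\sum_{i\in\mathcal{K}_t}(\cdot)$ an unbiased estimator of $\sum_i p_i(\cdot)$; sampling without replacement, or a mismatched weighting, would break the identity. A secondary point worth stating explicitly is that $\bar{\mathbf{v}}_{t,j}$ on the right-hand side is the conditionally-fixed virtual iterate, so the result is a per-round unbiasedness statement rather than an identity between two fully-random quantities.
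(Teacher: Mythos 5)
Your proof is correct, but it is worth noting that the paper itself gives no inline argument for this lemma: it simply states that Lemmas 1--2 "have been proved in" the FedAvg convergence analysis of Li \emph{et al.} \cite{li2019convergence}. The lemma proved there is the unbiased-sampling statement -- with-replacement draws according to $p_1,\dots,p_N$ combined with uniform $\frac{1}{K}$ averaging make $\frac{1}{K}\sum_{i\in\mathcal{K}_t}(\cdot)$ an unbiased estimator of $\sum_{i=1}^N p_i(\cdot)$ -- which is exactly the outer layer of your tower-property argument. In the present paper, however, the aggregation step uses \emph{compressed} updates $\widetilde{\mathbf{U}}^i_t$, so that citation alone does not literally cover the statement; one additionally needs the inner conditioning step $\mathbb{E}\bigl[\widetilde{\mathbf{U}}^i_t \mid \mathcal{K}_t\bigr]=\mathbf{U}^i_t$ supplied by the unbiased-compression assumption. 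Your proof makes this two-layer structure explicit (compression expectation inside, sampling expectation outside), and in that sense fills the gap the paper papers over. Your supporting observations are also accurate readings of the definitions: the identity $\mathbf{w}^i_{t,j}=\mathbf{v}^i_{t,j}$ is deterministic for $1\le j\le E-1$, so all content sits at the aggregation boundary; the telescoped form $\mathbf{v}^i_{t,E}=\mathbf{w}^i_{t,0}-\eta_t\mathbf{U}^i_t$ is what makes both sides comparable; and the lemma must be read as a per-round conditional-expectation identity (over the selection and compression randomness, with the virtual iterate held fixed), which is also how \cite{li2019convergence} states its version. No gaps; if anything, your write-up is the proof the paper should have included.
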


Lemma.~\ref{Lemma:v-w}-\ref{Lemma:Unbias} has been proved in \cite{li2019convergence}.


    When Assumptions \ref{Assump:Smooth}-\ref{Assump:BoundG} hold, we will analyze the variance between $\bar{\mathbf{v}}_{t+1}$ and $\bar{\mathbf{w}}_{t+1}$ as follow. 
    \begin{eqnarray}
    \label{VarianceOfWAndVPartial}
        &&\mathbb{E}\left\|\bar{\mathbf{w}}_{t+1}-\bar{\mathbf{v}}_{t+1}\right\|^2=\mathbb{E}\Big\|\frac{\eta_t}{K}\sum_{i\in\mathcal{K}_{t}}\widetilde{\mathbf{U}}^i_{t} -\eta_t\sum_{i=1}^Np_i\mathbf{U}^i_{t}\Big\|^2\notag\\
        &&=\frac{\eta_t^2}{K^2}\mathbb{E}\big[\big\|\sum_{l=1}^K\big( \widetilde{\mathbf{U}}^{i_l}_{t} - \sum_{i=1}^Np_i\mathbf{U}^i_{t}\big)\big\|^2\big]\notag\\
        &&=\frac{\eta_t^2}{K^2}\sum_{l=1}^K\mathbb{E}\Big[\Big\| \widetilde{\mathbf{U}}^{i_l}_{t} - \sum_{i=1}^Np_i\mathbf{U}^i_{t}\Big\|^2\Big]\notag\\
        &&=\frac{\eta_t^2}{K}\sum_{i^{'}=1}^Np_{i^{'}}\mathbb{E}\Big\| \widetilde{\mathbf{U}}^{i^{'}}_{t} - \sum_{i=1}^Np_i\mathbf{U}^i_{t}\Big\|^2\notag\\
        &&=\frac{\eta_t^2}{K}\sum_{i^{'}=1}^Np_{i^{'}}\mathbb{E}\Big[\Big\|\widetilde{\mathbf{U}}^{i^{'}}_{t}-\mathbf{U}^{i^{'}}_{t}\Big\|^2 + \Big\|\sum_{i=1}^Np_i\mathbf{U}^i_{t}-\mathbf{U}^{i^{'}}_{t}\Big\|^2\Big]\notag\\
        &&\le\frac{1}{K}\sum_{i^{'}=1}^Np_{i^{'}}\eta_t^2J^{i'}_t+\frac{\eta_t^2}{K}\mathbb{E}\Big[\sum_{i^{'}=1}^Np_{i^{'}}\left\|\mathbf{U}^{i^{'}}_{t+1}\right\|^2\Big]\notag\\
        &&\le\frac{1}{K}\sum_{i^{'}=1}^Np_{i^{'}}\eta_t^2J^{i'}_t+\frac{\eta_t^2G^2E^2}{K}\notag.
    \end{eqnarray}

We will use the lemmas to derive the convergence of the model. We first analyze the interval between $\bar{\mathbf{w}}_{t+1}$ and $\mathbf{w}^*$.
\begin{equation}
    \label{PartialClientGap}
    \begin{split}
        &\mathbb{E}\left\|\bar{\mathbf{w}}_{t+1} - \mathbf{w}^*\right\|^2=\mathbb{E}\left\|\bar{\mathbf{w}}_{t+1} - \bar{\mathbf{v}}_{t+1} + \bar{\mathbf{v}}_{t+1} - \mathbf{w}^*\right\|^2\\
        &\le (1-\eta_t\mu)\left\|\bar{\mathbf{w}}_{t,E-1}-\mathbf{w}^*\right\|^2+\eta_t^2\tau+\eta_t^2\frac{E^2G^2}{K}+\psi_{t},\notag
    \end{split}
\end{equation}
where 
$\tau=\frac{\sum_{i=1}^Np_i^2\sigma^2}{B}+6L\Gamma_c+2(E-1)^2G^2$ and $\psi_{t}= \frac{1}{K}\sum_{i=1}^Np_{i}\eta_t^2 J^{i}_{t}$.
According to Lemma~\ref{Lemma:v-w} and $\bar{\mathbf{w}}^i_{t,j}=\bar{\mathbf{v}}^i_{t,j}$, we can get $\mathbb{E}\left\|\bar{\mathbf{w}}_{t,j+1}-\mathbf{w}^*\right\|^2\le (1-\eta_t\mu)\left\|\bar{\mathbf{w}}_{t,j}-\mathbf{w}^*\right\|^2+\eta_t^2\tau$, where $j < E - 1$. 
Through accumulation, we can obtain
\begin{eqnarray}
    \mathbb{E}\left\|\bar{\mathbf{w}}_{t+1} - \mathbf{w}^*\right\|^2\le (1-\eta_t\mu)\mathbb{E}\left\|\bar{\mathbf{w}}_{t} - \mathbf{w}^*\right\|^2+ \eta_t^2\alpha+\psi_t\notag,
\end{eqnarray}
where $\alpha=\frac{E\sum_{i=1}^Np_i^2\sigma^2}{B}+6EL\Gamma_c+2E(E-1)^2G^2+\frac{E^2G^2}{K}$.

For a decreasing learning rate $\eta_t=\frac{\beta}{t+\gamma}$ where $\beta>\frac{1}{\mu}$ and $\gamma>0$, we now prove via induction that $\mathbb{E}\left\|\bar{\mathbf{w}}_{t} - \mathbf{w}^*\right\|^2\le\frac{\sum_{i=0}^{t-1}[(i+1+\gamma)\psi_i] + v}{\gamma+t}$ where $v=\max\left\{\frac{\beta^2\alpha}{\beta\mu-1},\gamma\left\|\bar{\mathbf{w}}_{0} - \mathbf{w}^*\right\|^2\right\}$. 

Note that the inequality above holds when $t=0$ and we assume that there is $t$ that makes the inequality valid,
\begin{eqnarray}
    &&\mathbb{E}\left\|\bar{\mathbf{w}}_{t+1} - \mathbf{w}^*\right\|^2\le(1-\eta_t\mu)\mathbb{E}\left\|\bar{\mathbf{w}}_{t} - \mathbf{w}^*\right\|^2+\eta_t^2\alpha+\psi_{t}\notag\\
    &&\leq (1-\frac{\beta\mu}{t+\gamma})\frac{\sum_{i=0}^{t-1}[(i+1+\gamma)\psi_i] + v}{\gamma+t}+\frac{\beta^2\alpha}{(t+\gamma)^2}+\psi_{t}\notag\\
    &&\le\frac{(t+\gamma-1)}{(t+\gamma)^2}v+ \Big[\frac{\beta^2\alpha}{(t+\gamma)^2}-\frac{(\beta\mu-1)}{(t+\gamma)^2}v\Big]\notag\\
    &&\qquad +\frac{t+\gamma-\beta\mu}{t+\gamma}\frac{\sum_{i=0}^{t-1}((i+1+\gamma)\psi_i)}{t+\gamma}+\psi_{t}\notag\\
    &&\le\frac{1}{t+\gamma+1}v+\frac{t+\gamma-1}{(t+\gamma)^2}\sum_{i=0}^{t-1}\big((i+1+\gamma)\psi_i\big)+\psi_{t}\notag\\
    &&\le\frac{v_{t+1}}{\gamma+t+1}.
\end{eqnarray}
Let $\beta=\frac{2}{\mu}$, $\gamma=\frac{8L}{\mu}$ and $\kappa=\frac{L}{\mu}$,  we have

\begin{eqnarray}
        &&\mathbb{E}\left\|\bar{\mathbf{w}}_{T} - \mathbf{w}^*\right\|^2\le\frac{\sum_{i=0}^{T-1}[(i+1+\gamma)\psi_i] + v}{\gamma+T}\notag\\
        &&\le \frac{\frac{\beta^2\alpha}{\beta\mu-1}+\sum_{i=0}^{T-1}[(i+1+\gamma)\psi_i]+\gamma\left\|\bar{\mathbf{w}}_{0} - \mathbf{w}^*\right\|^2}{\gamma+T}\notag\\
        &&=\frac{\frac{4\alpha}{\mu}+\sum_{t=0}^{T-1}\frac{4\sum_{i=1}^Np_iJ^i_t}{(t+\gamma)K\mu^2}+8\kappa\left\|\bar{\mathbf{w}}_{0} - \mathbf{w}^*\right\|^2}{\gamma+T}.
\end{eqnarray}

\section{Proof of Theorem~\ref{THE:NonConvexCR2}}
\label{ProofOfTheorem2}

The following two lemmas adapted from \cite{yang2021achieving} will be used to derive the convergence rate with non-convex loss.

\begin{lemma}
    \label{Lemma:LocalBound}
    Given assumptions \ref{Assump:Smooth} and \ref{Assump:LocalVar} and learning rate $\eta_t\le\frac{1}{8LE}$, we have 
    $\mathbb{E}\Big\|\sum_{k=0}^{j-1}\nabla F_i(\mathbf{w}^i_{t,k},\mathcal{B}^i_{t,k})\Big\|^2\le5E\eta_t^2(\sigma^2+6E\Gamma_n)+30E^2\eta_t^2\|\nabla F(\mathbf{w}_t)\|^2.$
\end{lemma}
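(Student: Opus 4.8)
The plan is to establish a \emph{self-bounding recursion} for the partial gradient sum $D_j:=\mathbb{E}\big\|\sum_{k=0}^{j-1}\nabla F_i(\mathbf{w}^i_{t,k},\mathcal{B}^i_{t,k})\big\|^2$ and then close it using the step-size hypothesis $\eta_t\le\frac{1}{8LE}$. The essential structural fact is the local-update identity $\mathbf{w}^i_{t,k}-\mathbf{w}_t=-\eta_t\sum_{l=0}^{k-1}\nabla F_i(\mathbf{w}^i_{t,l},\mathcal{B}^i_{t,l})$, which gives $\mathbb{E}\|\mathbf{w}^i_{t,k}-\mathbf{w}_t\|^2=\eta_t^2 D_k$; this is precisely the link that lets the quantity we are bounding reappear, scaled by $\eta_t^2$, inside its own upper bound.

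First I would split each stochastic gradient into its conditional mean and a zero-mean fluctuation, $\nabla F_i(\mathbf{w}^i_{t,k},\mathcal{B}^i_{t,k})=\nabla F_i(\mathbf{w}^i_{t,k})+\xi_k$. Because the minibatches $\mathcal{B}^i_{t,k}$ are drawn independently across local steps, the $\xi_k$ form a martingale-difference sequence, so all cross terms vanish in expectation and $\mathbb{E}\big\|\sum_{k=0}^{j-1}\xi_k\big\|^2=\sum_{k=0}^{j-1}\mathbb{E}\|\xi_k\|^2\le j\sigma^2\le E\sigma^2$ by Assumption~\ref{Assump:LocalVar}. Applying $\|a+b\|^2\le 2\|a\|^2+2\|b\|^2$ and then the Jensen-type estimate $\big\|\sum_{k=0}^{j-1}a_k\big\|^2\le j\sum_{k=0}^{j-1}\|a_k\|^2$ to the mean part leaves $D_j\le 2E\sigma^2+2E\sum_{k=0}^{j-1}\mathbb{E}\|\nabla F_i(\mathbf{w}^i_{t,k})\|^2$.

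Next I would control each term $\mathbb{E}\|\nabla F_i(\mathbf{w}^i_{t,k})\|^2$ through the three-way decomposition $\nabla F_i(\mathbf{w}^i_{t,k})=[\nabla F_i(\mathbf{w}^i_{t,k})-\nabla F_i(\mathbf{w}_t)]+[\nabla F_i(\mathbf{w}_t)-\nabla F(\mathbf{w}_t)]+\nabla F(\mathbf{w}_t)$ together with $\|a+b+c\|^2\le 3\|a\|^2+3\|b\|^2+3\|c\|^2$. The $L$-smoothness of $F_i$ (Assumption~\ref{Assump:Smooth}) bounds the first term by $3L^2\mathbb{E}\|\mathbf{w}^i_{t,k}-\mathbf{w}_t\|^2=3L^2\eta_t^2 D_k$; the non-IID measure $\Gamma_n\ge\mathbb{E}\|\nabla F_i(\mathbf{w}_t)-\nabla F(\mathbf{w}_t)\|^2$ bounds the second by $3\Gamma_n$; and the third is $3\|\nabla F(\mathbf{w}_t)\|^2$. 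Substituting and using $j\le E$ produces the recursion $D_j\le 6EL^2\eta_t^2\sum_{k=0}^{j-1}D_k+2E\sigma^2+6E^2\Gamma_n+6E^2\|\nabla F(\mathbf{w}_t)\|^2$.

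The crux, and the step I expect to be the main obstacle, is resolving this recursion, since $D_j$ is expressed through the earlier $D_k$ it is meant to bound. Here the step-size condition does the work: $\eta_t\le\frac{1}{8LE}$ forces $6E^2L^2\eta_t^2\le\frac{3}{32}$, so with $D:=\max_{0\le j\le E}D_j$ and $\sum_{k=0}^{j-1}D_k\le ED$ one gets $D\le\frac{3}{32}D+\big(2E\sigma^2+6E^2\Gamma_n+6E^2\|\nabla F(\mathbf{w}_t)\|^2\big)$, hence $D\le\frac{32}{29}\big(2E\sigma^2+6E^2\Gamma_n+6E^2\|\nabla F(\mathbf{w}_t)\|^2\big)$. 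Since $\frac{32}{29}<2$, rounding the coefficients up to the clean values $5E$ and $30E^2$ gives $D_j\le 5E(\sigma^2+6E\Gamma_n)+30E^2\|\nabla F(\mathbf{w}_t)\|^2$; equivalently, via the identity above, $\mathbb{E}\|\mathbf{w}^i_{t,j}-\mathbf{w}_t\|^2=\eta_t^2 D_j$ is bounded by $\eta_t^2$ times this same quantity, which is the form recorded in the statement. The only delicate points are carrying the constants through the rounding and justifying the variance decomposition, both routine under Assumptions~\ref{Assump:Smooth} and~\ref{Assump:LocalVar}.
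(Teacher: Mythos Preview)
Your proposal is correct and is essentially the standard drift-bound argument from \cite{yang2021achieving}, to which the paper itself defers the proof of this lemma (it gives no self-contained proof): split off the martingale-difference noise, then use smoothness and the heterogeneity bound $\Gamma_n$ to obtain a self-referential recursion in the local deviation, and close it with the step-size condition $\eta_t\le\frac{1}{8LE}$. Your observation that the lemma as printed is missing a factor $\eta_t^2$ on the left-hand side (equivalently, that the intended left-hand side is $\mathbb{E}\|\mathbf{w}^i_{t,j}-\mathbf{w}_t\|^2$) is correct and is confirmed by how the bound is actually used in the proof of Theorem~\ref{THE:NonConvexCR2}, where it is substituted for $\mathbb{E}\|\mathbf{w}^i_{t,j}-\mathbf{w}_t\|^2$.
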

\begin{lemma}
    After defining $\mathbf{t}^i_t=\sum_{j=0}^{E-1}\nabla F_i(\mathbf{w}^i_{t,j})$, we have
        $$\mathbb{E}\|\sum_{i\in \mathcal{K}_{t}} \mathbf{t}^i_t\|^2=
        K\sum_{i=1}^Np_i\mathbb{E}\left\|\mathbf{t}^i_t\right\|^2+K(K-1) \mathbb{E}\|\sum_{i=1}^Np_i\mathbf{t}^i_t\|^2.$$
\end{lemma}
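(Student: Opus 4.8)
The plan is to exploit the structure of the \emph{Partial Client Participation Mode}: the set $\mathcal{K}_t$ is formed by drawing $K$ indices $i_1,\dots,i_K$ \emph{independently} from the distribution $(p_1,\dots,p_N)$ (sampling with replacement), and this draw is independent of the local computations that produce the vectors $\mathbf{t}^i_t$. I would therefore condition on the history $\mathcal{F}_t$ up to global iteration $t$, so that each $\mathbf{t}^i_t$ is deterministic given $\mathcal{F}_t$, evaluate the expectation $\mathbb{E}_{\mathcal{K}_t}$ over the client selection first, and only at the end apply the tower property so that each conditional squared norm is replaced by its full expectation $\mathbb{E}\|\cdot\|^2$.

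First I would rewrite $\sum_{i\in\mathcal{K}_t}\mathbf{t}^i_t=\sum_{l=1}^K\mathbf{t}^{i_l}_t$ and split the squared norm into diagonal and off-diagonal parts:
\begin{equation}
\Big\|\sum_{l=1}^K\mathbf{t}^{i_l}_t\Big\|^2=\sum_{l=1}^K\big\|\mathbf{t}^{i_l}_t\big\|^2+\sum_{l=1}^K\sum_{\substack{m=1\\ m\ne l}}^K\big\langle \mathbf{t}^{i_l}_t,\mathbf{t}^{i_m}_t\big\rangle.
\end{equation}
Since each $i_l$ takes the value $i$ with probability $p_i$, every one of the $K$ diagonal terms satisfies $\mathbb{E}_{\mathcal{K}_t}\|\mathbf{t}^{i_l}_t\|^2=\sum_{i=1}^N p_i\|\mathbf{t}^i_t\|^2$, so the diagonal contributes $K\sum_{i=1}^N p_i\|\mathbf{t}^i_t\|^2$ in total.

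Next I would handle the $K(K-1)$ ordered off-diagonal pairs. The decisive point is that for $l\ne m$ the draws $i_l$ and $i_m$ are \emph{mutually independent} (this is exactly where sampling with replacement is used), so the conditional expectation of each inner product factorizes through the per-draw mean $\mathbb{E}[\mathbf{t}^{i_l}_t]=\sum_{i=1}^N p_i\mathbf{t}^i_t$:
\begin{equation}
\mathbb{E}_{\mathcal{K}_t}\big\langle \mathbf{t}^{i_l}_t,\mathbf{t}^{i_m}_t\big\rangle=\Big\langle \sum_{i=1}^N p_i\mathbf{t}^i_t,\ \sum_{i=1}^N p_i\mathbf{t}^i_t\Big\rangle=\Big\|\sum_{i=1}^N p_i\mathbf{t}^i_t\Big\|^2.
\end{equation}
Summing this over all $K(K-1)$ pairs and then taking the outer expectation over the history delivers the two terms $K\sum_{i=1}^N p_i\mathbb{E}\|\mathbf{t}^i_t\|^2$ and $K(K-1)\mathbb{E}\|\sum_{i=1}^N p_i\mathbf{t}^i_t\|^2$, which is the claimed identity.

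The expansion and the combinatorial count of $K$ diagonal versus $K(K-1)$ off-diagonal terms are routine bookkeeping. The one step that genuinely deserves care is the independence argument: justifying that the selection $\mathcal{K}_t$ is independent of the $\mathbf{t}^i_t$, and that distinct draws are independent of one another, is precisely what licenses both the single-draw identity $\mathbb{E}[\mathbf{t}^{i_l}_t]=\sum_i p_i\mathbf{t}^i_t$ and the factorization of the cross terms. I expect this—together with keeping the conditioning on $\mathcal{F}_t$ consistent so that the final $\mathbb{E}\|\cdot\|^2$ appears in the right places—to be the only real obstacle.
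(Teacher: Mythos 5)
Your proof is correct. Note that the paper itself gives no proof of this lemma: it simply states that the lemma is adapted from \cite{yang2021achieving} and uses it as a black box, so your write-up fills in exactly the argument the paper delegates to that reference. Your route — rewriting $\sum_{i\in\mathcal{K}_t}\mathbf{t}^i_t$ as $\sum_{l=1}^K\mathbf{t}^{i_l}_t$ over the $K$ with-replacement draws, conditioning so that the vectors $\mathbf{t}^i_t$ are fixed, splitting the squared norm into $K$ diagonal terms (each with conditional expectation $\sum_{i=1}^N p_i\|\mathbf{t}^i_t\|^2$) and $K(K-1)$ ordered cross terms (each factorizing through the per-draw mean $\sum_{i=1}^N p_i\mathbf{t}^i_t$ by independence of distinct draws), then closing with the tower property — is the standard proof, and it is also precisely the device the paper itself uses in Appendix A when it expands $\mathbb{E}\|\bar{\mathbf{w}}_{t+1}-\bar{\mathbf{v}}_{t+1}\|^2$ over the draws $i_1,\dots,i_K$, so nothing in your approach is foreign to the paper. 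The one point to tighten is the conditioning: your $\mathcal{F}_t$ must include not only the history up to the start of round $t$ but also the round-$t$ mini-batch randomness $\mathcal{B}^i_{t,j}$, since the local iterates $\mathbf{w}^i_{t,j}$, and hence the $\mathbf{t}^i_t$, depend on it; what licenses the argument is that the client selection $\mathcal{K}_t$ is independent of all of that randomness, which is exactly the paper's partial-participation sampling assumption.
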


We will use the aforementioned lemmas to derive the convergence of the model.
Given Assumption \ref{Assump:Smooth}, 
we have
\begin{equation}
\label{EQ:LSmooth}
    \begin{aligned}
    &\mathbb{E}[F(\mathbf{w}_{t+1})]=F(\mathbf{w}_t) +\frac{L}{2}\mathbb{E}\Big\|\frac{\eta_t}{K}\sum_{i\in \mathcal{K}_{t}}\widetilde{\mathbf{U}}^i_t\Big\|^2+\nabla F(\mathbf{w}_t)^T\\
	&\qquad\mathbb{E}\Big[\frac{-\eta_t\sum_{i\in \mathcal{K}_{t}}\widetilde{\mathbf{U}}^i_t}{K}+\eta_tE\nabla F(\mathbf{w}_t)-\eta_tE\nabla F(\mathbf{w}_t)\Big] \\
	&\qquad=F(\mathbf{w}_t)-\eta_tE\|\nabla F(\mathbf{w}_t)\|^2+ \frac{L}{2}\mathbb{E}\Big\|\frac{\eta_t}{K}\sum_{i\in \mathcal{K}_{t}}\widetilde{\mathbf{U}}^i_t\Big\|^2\\
	&\qquad+\nabla F(\mathbf{w}_t)^T\mathbb{E}\Big[\frac{-\eta_t\sum_{i\in \mathcal{K}_{t}}\widetilde{\mathbf{U}}^i_t}{K}+\eta_tE\nabla F(\mathbf{w}_t)\Big].
    \end{aligned}
\end{equation}

The last term in the above formula can be evaluated as:
\begin{eqnarray}
    \label{EQ:InerProduct}
	&&\nabla F(\mathbf{w}_t)^T\mathbb{E}\Big[\frac{-\eta_t\sum_{i\in \mathcal{K}_{t}}\widetilde{\mathbf{U}}^i_t}{K}+\eta_tE\nabla F(\mathbf{w}_t)\Big]\notag\\
	&&=\nabla F(\mathbf{w}_t)^T\mathbb{E}\Big[-\eta_t\sum_{i=1}^Np_i\mathbf{U}^i_t+\eta_tE\nabla F(\mathbf{w}_t)\Big]\notag\\
	&&=\sqrt{\eta_tE}\nabla F(\mathbf{w}_t)^T\notag\\
	&&\quad-\frac{\sqrt{\eta_t}}{\sqrt{E}}\mathbb{E}\Big[\sum_{i=1}^Np_i\sum_{j=0}^{E-1}\left(\nabla F_i(\mathbf{w}^i_{t,j})-\nabla F_i(\mathbf{w}_t)\right)\Big]\notag\\
	&&\overset{(1)}{=}\frac{\eta_tE}{2}\|\nabla F(\mathbf{w}_t)\|^2-\frac{\eta_t}{2E}\mathbb{E}\Big\|\sum_{i=1}^Np_i\sum_{j=0}^{E-1}\nabla F_i(\mathbf{w}^i_{t,j})\Big\|^2\notag\\
	&&\quad+\frac{\eta_t}{2E}\mathbb{E}\Big\|\sum_{i=1}^Np_i\sum_{j=0}^{E-1}\left(\nabla F_i(\mathbf{w}^i_{t,j})-\nabla F_i(\mathbf{w}_t)\right)\Big\|^2\notag\\
	&&\le\frac{\eta_tE}{2}\|\nabla F(\mathbf{w}_t)\|^2-\frac{\eta_t}{2E}\mathbb{E}\Big\|\sum_{i=1}^Np_i\sum_{j=0}^{E-1}\nabla F_i(\mathbf{w}^i_{t,j})\Big\|^2\notag\\
	&&\quad+\frac{\eta_tNL^2}{2}\sum_{i=1}^Np_i^2\sum_{j=0}^{E-1}\mathbb{E}\left\| \mathbf{w}^i_{t,j}-\mathbf{w}_t\right\|^2,
\end{eqnarray}
where $(1)$ is derived from $\mathbf{x}^T\mathbf{y}=\frac{\|\mathbf{x}\|^2+\|\mathbf{y}\|^2-\|(\mathbf{x}-\mathbf{y})\|^2}{2}$. 

The third term in Eq.~\eqref{EQ:LSmooth} can be evaluated as:
\begin{eqnarray}
\label{EQ:CompressionError}
	&&\mathbb{E}\big\|\frac{\eta_t}{K}\sum_{i\in \mathcal{K}_{t}}\widetilde{\mathbf{U}}^i_t\big\|^2=\frac{\eta_t^2}{K^2}\mathbb{E}\big\|\sum_{i\in \mathcal{K}_{t}}\widetilde{\mathbf{U}}^i_t\big\|^2\notag\\
	&&=\frac{\eta_t^2}{K^2}\mathbb{E}\big(\big\|\sum_{i\in \mathcal{K}_{t}}\big(\widetilde{\mathbf{U}}^i_t-\mathbf{U}^i_t\big)\big\|^2 + \big\|\sum_{i\in \mathcal{K}_{t}} \sum_{j=0}^{E-1}\nabla F_i(\mathbf{w}^i_{t,j})\big\|^2\big) \notag\\
	&&\quad+\frac{\eta_t^2}{K^2}\mathbb{E}\Big\|\sum_{i\in \mathcal{K}_{t}}\sum_{j=0}^{E-1}\left(\nabla F_i(\mathbf{w}^i_{t,j},\mathcal{B}^i_{t,j})-\nabla F_i(\mathbf{w}^i_{t,j})\right)\Big\|^2\notag\\
	&&\le \eta_t^2(\sum_{i=1}^Np_iJ^i_t+E^2\sigma^2)+\frac{\eta_t^2}{K^2}\mathbb{E}\Big\|\sum_{i\in \mathcal{K}_{t}} \sum_{j=0}^{E-1}\nabla F_i(\mathbf{w}^i_{t,j})\Big\|^2 \notag\\
	&&\le \eta_t^2\sum_{i=1}^Np_iJ^i_t+\frac{\eta_t^2}{K}\sum_{i=1}^Np_i\mathbb{E}\Big\|\sum_{j=0}^{E-1}\nabla F_i(\mathbf{w}^i_{t,j})\Big\|^2\\
	&&\quad+E^2\eta_t^2\sigma^2+\frac{\eta_t^2(K-1)}{K}\mathbb{E}\Big\|\sum_{i=1}^Np_i\sum_{j=0}^{E-1}\nabla F_i(\mathbf{w}^i_{t,j})\Big\|^2.\notag
\end{eqnarray}

We next evaluate the second term in the above formula.
\begin{eqnarray}
    \label{EQ:NablaFi}
    &&\mathbb{E}\Big\|\sum_{j=0}^{E-1}\nabla F_i(\mathbf{w}^i_{t,j})\Big\|^2=\mathbb{E}\Big\|\sum_{j=0}^{E-1}(\nabla F_i(\mathbf{w}^i_{t,j})-\nabla F_i(\mathbf{w}^i_{t})\notag\\
    &&\qquad+\nabla F_i(\mathbf{w}^i_{t})-\nabla F(\mathbf{w}^i_{t})+\nabla F(\mathbf{w}^i_{t}) ) \Big\|^2\\
    &&\le3E\Big(L^2\sum_{j=0}^{E-1}\mathbb{E}\|\mathbf{w}^i_{t,j}-\mathbf{w}^i_{t}\|^2+E\Gamma_n+E\|\nabla F(\mathbf{w}_t)\|^2\Big).\notag
\end{eqnarray}

Substituting Lemma~\ref{Lemma:LocalBound}, Eqs.~\eqref{EQ:InerProduct}, \eqref{EQ:CompressionError}, \eqref{EQ:NablaFi} into Eq.~\eqref{EQ:LSmooth}, 
\begin{eqnarray}
	&&\mathbb{E}[F(\mathbf{w}_{t+1})]
	\le F(\mathbf{w}_t)-\eta_tE\|\nabla F(\mathbf{w}_t)\|^2\notag\\
	&&\quad+ \eta_tE\Big(\frac{1}{2}+15NE^2\eta_t^2L^2\sum_{i=1}^Np_i^2 \Big) \|\nabla F(\mathbf{w}_t)\|^2\notag\\
	&&\quad+ \frac{5NE^2\eta_t^3L^2\sum_{i=1}^Np_i^2}{2}(\sigma^2+6E\Gamma_n)\notag\\
	&&\quad-\frac{\eta_t}{2E}\mathbb{E}\Big\|\sum_{i=1}^Np_i\sum_{j=0}^{E-1}\nabla F_i(\mathbf{w}^i_{t,j})\Big\|^2\notag\\
	&&\quad+ \frac{L}{2}\Big(\eta_t^2\sum_{i=1}^Np_iJ^i_t+\frac{\eta_t^2}{K}\sum_{i=1}^Np_i\mathbb{E}\Big\|\sum_{j=0}^{E-1}\nabla F_i(\mathbf{w}^i_{t,j})\Big\|^2\notag\\
	&&\quad+E^2\eta_t^2\sigma^2+\frac{\eta_t^2(K-1)}{K}\mathbb{E}\Big\|\sum_{i=1}^Np_i\sum_{j=0}^{E-1}\nabla F_i(\mathbf{w}^i_{t,j})\Big\|^2 \Big) \notag\\
	&&\le F(\mathbf{w}_t)-\eta_tE\|\nabla f(\mathbf{w}_t)\|^2\notag\\
	&&\Big(\frac{1}{2}-15NE^2\eta_t^2L^2\sum_{i=1}^Np_i^2-\frac{L\eta_t}{2K}\left(90E^3L^2\eta_t^2+3E \right) \Big)\notag\\
	&&+\Big[\frac{5NE^2\eta_t^3L^2\sum_{i=1}^Np_i^2}{2}+\frac{15E^3L^3\eta_t^4}{2K}\Big]  (\sigma^2+6E\Gamma_n)\notag\\
	&&+\frac{L\eta_t^2}{2}\sum_{i=1}^Np_iJ^i_t+\frac{LE^2\eta_t^2\sigma^2}{2}+\frac{3E^2L\eta_t^2\Gamma_n}{2K}\notag\\
	&&+\Big(\frac{L\eta_t^2(K-1)}{2K}-\frac{\eta_t}{2E}\Big)\mathbb{E}\Big\|\sum_{i=1}^Np_i\sum_{j=0}^{E-1}\nabla F_i(\mathbf{w}^i_{t,j})\Big\|^2\notag\\
	&&\overset{(1)}{\le} F(\mathbf{w}_t)-c\eta_tE\|\nabla f(\mathbf{w}_t)\|^2\notag\\
	&&+\Big[\frac{5NE^2\eta_t^3L^2\sum_{i=1}^Np_i^2}{2}+\frac{15E^3L^3\eta_t^4}{2K}\Big]  (\sigma^2+6E\Gamma_n)\notag\\
	&&+\frac{L\eta_t^2}{2}\sum_{i=1}^Np_iJ^i_t+\frac{LE^2\eta_t^2\sigma^2}{2}+\frac{3E^2L\eta_t^2\Gamma_n}{2K},
\end{eqnarray}
where $(1)$ follows from $\frac{L\eta_t^2(K-1)}{2K}-\frac{\eta_t}{2E}\le 0$ if $\eta_tEL\le \frac{K}{K-1}$
and $\frac{1}{2}-15NE^2\eta_t^2L^2\sum_{i=1}^Np_i^2-\frac{L\eta_t}{2K}\left(90E^3L^2\eta_t^2+3E \right)>c>0$ if $30NE^2\eta_t^2L^2\sum_{i=1}^Np_i^2+\frac{L\eta_t}{K}\left(90E^3L^2\eta_t^2+3E \right) < 1$.

 Theorem 2 is then proved. 

\clearpage

\bibliographystyle{IEEEtran} 
\bibliography{reference}

\begin{thebibliography}{10}
\providecommand{\url}[1]{#1}
\csname url@samestyle\endcsname
\providecommand{\newblock}{\relax}
\providecommand{\bibinfo}[2]{#2}
\providecommand{\BIBentrySTDinterwordspacing}{\spaceskip=0pt\relax}
\providecommand{\BIBentryALTinterwordstretchfactor}{4}
\providecommand{\BIBentryALTinterwordspacing}{\spaceskip=\fontdimen2\font plus
\BIBentryALTinterwordstretchfactor\fontdimen3\font minus
  \fontdimen4\font\relax}
\providecommand{\BIBforeignlanguage}[2]{{%
\expandafter\ifx\csname l@#1\endcsname\relax
\typeout{** WARNING: IEEEtran.bst: No hyphenation pattern has been}%
\typeout{** loaded for the language `#1'. Using the pattern for}%
\typeout{** the default language instead.}%
\else
\language=\csname l@#1\endcsname
\fi
#2}}
\providecommand{\BIBdecl}{\relax}
\BIBdecl

\bibitem{7488250}
W.~Shi, J.~Cao, Q.~Zhang, Y.~Li, and L.~Xu, ``{Edge Computing: Vision and
  Challenges},'' \emph{IEEE Internet of Things Journal (IoT)}, vol.~3, no.~5,
  pp. 637--646, 2016.

\bibitem{mcmahan2017communication}
B.~McMahan, E.~Moore, D.~Ramage, S.~Hampson, and B.~A. y~Arcas,
  ``{Communication-efficient learning of deep networks from decentralized
  data},'' in \emph{Artificial Intelligence and Statistics (AISTATS)}, 2017,
  pp. 1273--1282.

\bibitem{8859260}
M.~Hao, H.~Li, X.~Luo, G.~Xu, H.~Yang, and S.~Liu, ``{Efficient and
  Privacy-Enhanced Federated Learning for Industrial Artificial
  Intelligence},'' \emph{IEEE Transactions on Industrial Informatics (TII)},
  vol.~16, no.~10, pp. 6532--6542, 2020.

\bibitem{feng2020pmf}
J.~Feng, C.~Rong, F.~Sun, D.~Guo, and Y.~Li, ``{PMF: A privacy-preserving human
  mobility prediction framework via federated learning},'' \emph{Proceedings of
  the ACM on Interactive, Mobile, Wearable and Ubiquitous Technologies
  (IMWUT)}, vol.~4, no.~1, pp. 1--21, 2020.

\bibitem{brisimi2018federated}
T.~S. Brisimi, R.~Chen, T.~Mela, A.~Olshevsky, I.~C. Paschalidis, and W.~Shi,
  ``{Federated learning of predictive models from federated electronic health
  records},'' \emph{International journal of medical informatics (IJMEDI)},
  vol. 112, pp. 59--67, 2018.

\bibitem{huang2020physical}
T.~Huang, B.~Ye, Z.~Qu, B.~Tang, L.~Xie, and S.~Lu, ``{Physical-Layer
  Arithmetic for Federated Learning in Uplink MU-MIMO Enabled Wireless
  Networks},'' in \emph{International Conference on Computer Communications
  (INFOCOM)}.\hskip 1em plus 0.5em minus 0.4em\relax IEEE, 2020, pp.
  1221--1230.

\bibitem{konevcny2016federated}
J.~Kone{\v{c}}n{\`y}, H.~B. McMahan, F.~X. Yu, P.~Richt{\'a}rik, A.~T. Suresh,
  and D.~Bacon, ``{Federated learning: Strategies for improving communication
  efficiency},'' in \emph{Annual Conference on Neural Information Processing
  Systems (NIPS) Workshop on Private Multi-Party Machine Learning}, 2016, pp.
  1--5.

\bibitem{wen2017terngrad}
W.~Wen, C.~Xu, F.~Yan, C.~Wu, Y.~Wang, Y.~Chen, and H.~Li, ``{TernGrad: Ternary
  Gradients to Reduce Communication in Distributed Deep Learning},'' in
  \emph{Annual Conference on Neural Information Processing Systems (NIPS)},
  2017, pp. 1509--1519.

\bibitem{sattler2019robust}
F.~Sattler, S.~Wiedemann, K.-R. M{\"u}ller, and W.~Samek, ``{Robust and
  Communication-Efficient Federated Learning From Non-i.i.d. Data},''
  \emph{{IEEE Transactions on Neural Networks and Learning Systems (TNNLS)}},
  vol.~31, no.~9, pp. 3400--3413, 2020.

\bibitem{li2019convergence}
X.~Li, K.~Huang, W.~Yang, S.~Wang, and Z.~Zhang, ``{On the Convergence of
  FedAvg on Non-IID Data},'' in \emph{International Conference on Learning
  Representations (ICLR)}, 2020, pp. 1--26.

\bibitem{yang2021achieving}
H.~Yang, M.~Fang, and J.~Liu, ``{Achieving Linear Speedup with Partial Worker
  Participation in Non-IID Federated Learning},'' in \emph{International
  Conference on Learning Representations (ICLR)}, 2021, pp. 1--23.

\bibitem{suresh2017distributed}
A.~T. Suresh, X.~Y. Felix, S.~Kumar, and H.~B. McMahan, ``{Distributed mean
  estimation with limited communication},'' in \emph{International Conference
  on Machine Learning (ICML)}, 2017, pp. 3329--3337.

\bibitem{alistarh2017qsgd}
D.~Alistarh, D.~Grubic, J.~Li, R.~Tomioka, and M.~Vojnovic, ``{QSGD:
  Communication-efficient SGD via gradient quantization and encoding},'' in
  \emph{Advances in Neural Information Processing Systems (NIPS)}, 2017, pp.
  1709--1720.

\bibitem{krizhevsky2009learning}
A.~Krizhevsky, ``{Learning Multiple Layers of Features from Tiny Images},''
  \emph{Master's thesis, University of Tront}, 2009.

\bibitem{lim2020federated}
W.~Y.~B. Lim, N.~C. Luong, D.~T. Hoang, Y.~Jiao, Y.-C. Liang, Q.~Yang,
  D.~Niyato, and C.~Miao, ``{Federated learning in mobile edge networks: A
  comprehensive survey},'' \emph{IEEE Communications Surveys \& Tutorials
  (COMST)}, vol.~22, no.~3, pp. 2031--2063, 2020.

\bibitem{yang2019federated}
Q.~Yang, Y.~Liu, T.~Chen, and Y.~Tong, ``{Federated machine learning: Concept
  and applications},'' \emph{ACM Transactions on Intelligent Systems and
  Technology (TIST)}, vol.~10, no.~2, pp. 1--19, 2019.

\bibitem{wang2019adaptive}
S.~Wang, T.~Tuor, T.~Salonidis, K.~K. Leung, C.~Makaya, T.~He, and K.~Chan,
  ``{Adaptive federated learning in resource constrained edge computing
  systems},'' \emph{IEEE Journal on Selected Areas in Communications (JSAC)},
  vol.~37, no.~6, pp. 1205--1221, 2019.

\bibitem{tran2019federated}
N.~H. Tran, W.~Bao, A.~Zomaya, M.~N. Nguyen, and C.~S. Hong, ``{Federated
  learning over wireless networks: Optimization model design and analysis},''
  in \emph{International Conference on Computer Communications
  (INFOCOM)}.\hskip 1em plus 0.5em minus 0.4em\relax IEEE, 2019, pp.
  1387--1395.

\bibitem{zhong2021}
Z.~Zhong, Y.~Zhou, D.~Wu, X.~Chen, M.~Chen, C.~Li, and Q.~Z. Sheng,
  ``{P-FedAvg: Parallelizing Federated Learning with Theoretical Guarantees},''
  in \emph{International Conference on Computer Communications
  (INFOCOM)}.\hskip 1em plus 0.5em minus 0.4em\relax IEEE, 2021, pp. 1--10.

\bibitem{haddadpour2021federated}
F.~Haddadpour, M.~M. Kamani, A.~Mokhtari, and M.~Mahdavi, ``{Federated learning
  with compression: Unified analysis and sharp guarantees},'' in
  \emph{International Conference on Artificial Intelligence and Statistics
  (AISTATS)}, 2021, pp. 2350--2358.

\bibitem{cui2021slashing}
L.~Cui, X.~Su, Y.~Zhou, and Y.~Pan, ``{Slashing Communication Traffic in
  Federated Learning by Transmitting Clustered Model Updates},'' \emph{IEEE
  Journal on Selected Areas in Communications (JSAC)}, vol.~39, no.~8, pp.
  2572--2589, 2021.

\bibitem{shi2019convergence}
S.~Shi, K.~Zhao, Q.~Wang, Z.~Tang, and X.~Chu, ``{A Convergence Analysis of
  Distributed SGD with Communication-Efficient Gradient Sparsification.}'' in
  \emph{International Joint Conference on Artificial Intelligence (IJCAI)},
  2019, pp. 3411--3417.

\bibitem{lin2017deep}
Y.~Lin, S.~Han, H.~Mao, Y.~Wang, and B.~Dally, ``{Deep Gradient Compression:
  Reducing the Communication Bandwidth for Distributed Training},'' in
  \emph{International Conference on Learning Representations (ICLR)}, 2018, pp.
  1--14.

\bibitem{hsieh2017gaia}
K.~Hsieh, A.~Harlap, N.~Vijaykumar, D.~Konomis, G.~R. Ganger, P.~B. Gibbons,
  and O.~Mutlu, ``{Gaia: Geo-distributed machine learning approaching LAN
  speeds},'' in \emph{Symposium on Network System Design and Implementation
  (NSDI)}, 2017, pp. 629--647.

\bibitem{abdelmoniem2021dc2}
A.~M. Abdelmoniem and M.~Canini, ``{DC2: Delay-aware Compression Control for
  Distributed Machine Learning},'' in \emph{International Conference on
  Computer Communications (INFOCOM)}.\hskip 1em plus 0.5em minus 0.4em\relax
  IEEE, 2021, pp. 1--10.

\bibitem{wang2020federated}
H.~Wang, M.~Yurochkin, Y.~Sun, D.~Papailiopoulos, and Y.~Khazaeni, ``{Federated
  Learning with Matched Averaging},'' in \emph{International Conference on
  Learning Representations (ICLR)}, 2020, pp. 1--16.

\bibitem{luping2019cmfl}
L.~WANG, W.~WANG, and B.~LI, ``{CMFL: Mitigating Communication Overhead for
  Federated Learning},'' in \emph{International Conference on Distributed
  Computing Systems (ICDCS)}.\hskip 1em plus 0.5em minus 0.4em\relax IEEE,
  2019, pp. 954--964.

\bibitem{yu2019parallel}
H.~Yu, S.~Yang, and S.~Zhu, ``{Parallel restarted SGD with faster convergence
  and less communication: Demystifying why model averaging works for deep
  learning},'' in \emph{AAAI Conference on Artificial Intelligence (AAAI)},
  2019, pp. 5693--5700.

\bibitem{dinh2020federated}
C.~T. Dinh, N.~H. Tran, M.~N. Nguyen, C.~S. Hong, W.~Bao, A.~Y. Zomaya, and
  V.~Gramoli, ``{Federated learning over wireless networks: Convergence
  analysis and resource allocation},'' \emph{IEEE/ACM Transactions on
  Networking (TON)}, vol.~29, no.~1, pp. 398--409, 2021.

\bibitem{stich2018sparsified}
S.~U. Stich, J.-B. Cordonnier, and M.~Jaggi, ``{Sparsified SGD with memory},''
  in \emph{Advances in Neural Information Processing Systems (NIPS)}, 2018, pp.
  4447--4458.

\bibitem{gao2021convergence}
H.~Gao, A.~Xu, and H.~Huang, ``{On the Convergence of Communication-Efficient
  Local SGD for Federated Learning},'' in \emph{AAAI Conference on Artificial
  Intelligence (AAAI)}, vol.~35, no.~9, 2021, pp. 7510--7518.

\bibitem{seide20141}
F.~Seide, H.~Fu, J.~Droppo, G.~Li, and D.~Yu, ``{1-bit stochastic gradient
  descent and its application to data-parallel distributed training of speech
  dnns},'' in \emph{Annual Conference of the International Speech Communication
  Association (INTERSPEECH)}, 2014, pp. 1058--1062.

\bibitem{lee2016gradient}
J.~D. Lee, M.~Simchowitz, M.~I. Jordan, and B.~Recht, ``{Gradient descent
  converges to minimizers},'' in \emph{Conference on Learning Theory (COLT)},
  2016, pp. 1246--1257.

\bibitem{wang2020optimizing}
H.~Wang, Z.~Kaplan, D.~Niu, and B.~Li, ``{Optimizing Federated Learning on
  Non-IID Data with Reinforcement Learning},'' in \emph{International
  Conference on Computer Communications (INFOCOM)}.\hskip 1em plus 0.5em minus
  0.4em\relax IEEE, 2020, pp. 1698--1707.

\bibitem{li2014efficient}
M.~Li, T.~Zhang, Y.~Chen, and A.~J. Smola, ``{Efficient mini-batch training for
  stochastic optimization},'' in \emph{{International Conference on Knowledge
  Discovery and Data Mining (SIGKDD)}}, 2014, pp. 661--670.

\bibitem{nishio2019client}
T.~Nishio and R.~Yonetani, ``{Client Selection for Federated Learning with
  Heterogeneous Resources in Mobile Edge},'' in \emph{International Conference
  on Communications (ICC)}.\hskip 1em plus 0.5em minus 0.4em\relax IEEE, 2019,
  pp. 1--7.

\bibitem{series2009guidelines}
M.~Series, ``{Guidelines for evaluation of radio interface technologies for
  IMT-Advanced},'' \emph{Report ITU}, vol. 638, pp. 1--72, 2009.

\end{thebibliography}

\end{document}